\date{} 					
\author[1]{Reijo~Jaakkola}
\author[1]{Tomi~Janhunen}
\author[1]{Antti~Kuusisto}
\author[2]{Magdalena~Ortiz}
\author[1]{Matias~Selin}
\author[2]{Mantas~Šimkus}
\affil[1]{Tampere University, Finland}
\affil[2]{TU Wien, Austria}
\theoremstyle{plain}
\newtheorem{theorem}{Theorem}
\newtheorem{lemma}[theorem]{Lemma}
\newtheorem{proposition}[theorem]{Proposition}
\theoremstyle{definition}
\newtheorem{example}[theorem]{Example}
\title{Graph Learning via Logic-Based Weisfeiler-Leman Variants and Tabularization}
\theoremstyle{plain}
\newcommand{\GQ}{\langle Q\rangle}               
\pgfplotsset{compat=1.18} 
\begin{document}

\maketitle

\begin{abstract}
We present a novel approach for graph classification based on tabularizing graph data via new variants of the Weisfeiler–Leman algorithm and then applying methods for tabular data.
The variants are obtained by modifying the underlying logical framework, and we establish a precise theoretical characterization of their expressive power using a novel generalization of the bisimulation game for generalized quantifiers. We then test our method on 14 datasets that span a range of application domains. The experiments demonstrate that on datasets with up to 40 000 samples, our approach generally matches the predictive performance of graph neural networks and graph transformers, without requiring a GPU or extensive hyperparameter tuning. Even when our method's tuning time is included and the baselines' is not, our method is 5--20 times faster. When tuning time is included for all methods, the gap is significantly greater in favour of our method.
\end{abstract}

\section{Introduction}\label{sec:intro}

Graph Neural Networks (GNNs) and Graph Transformers (GTs) achieve strong results on graph-structured data by learning from both node features and graph topology \cite{ScarselliTNN2009,xu2018powerful,rampavsek2022recipe}. However, this comes at a cost: training is typically much slower than for state-of-the-art methods on tabular data \cite{kriege2020survey,dwivedi2023benchmarking}, and performance is highly sensitive to architecture choice and hyperparameter tuning \cite{Errica2020A, tonshoff2023did, luocan}.

In this paper, we show that for graph classification on small and medium-sized datasets, much of this complexity can be avoided. Our approach is based on a simple observation: the Weisfeiler--Leman algorithm (WL) \cite{Weisfeiler:1968ve}, originally designed for graph isomorphism testing, computes exactly the node-level information that standard GNNs can learn from \cite{xu2018powerful,GroheLICS2021LogicGNNs}. By running WL and counting the frequency of each resulting node label, we obtain a fixed-length feature vector for each graph, converting the graph dataset into a tabular dataset. We can then apply off-the-shelf tabular classifiers---in our case, random forests \cite{DBLP:journals/ml/Breiman01} with default hyperparameters---directly to this representation.

This naturally raises the question: how much neighbourhood information does the
algorithm actually need?
Standard WL counts neighbours exactly---but does a classifier really benefit from
knowing that a node has 234 versus 235 neighbours of a given type?
We address this by defining a family of WL variants parametrized by a set $\mathcal{Q}$ of
\emph{generalized quantifiers}, which control how coarsely the
algorithm aggregates neighbourhood information.
These range from a very crude variant that tracks only which types are
present in the neighbourhood (ignoring counts entirely) to the standard WL that
counts precisely. Our final pipeline (which we call $\mathcal{Q}$\textbf{-WL-RF}) involves selecting a set of $\mathcal{Q}$-WL variants and running them for a number of iterations, thus producing a \textit{set} of colourings of the input graph. We then include all of these colourings in the produced tabular dataset.

On the theoretical side, we establish an exact characterization of the expressive
power of $\mathcal{Q}$-WL for any set of generalized quantifiers~$\mathcal{Q}$,
via a novel generalization of the bisimulation game.
This provides a precise guarantee: each variant distinguishes exactly those graphs
that are separable in the corresponding modal logic.

For empirical verification of these ideas, we select three variants of $\mathcal{Q}$-WL and test our method on 14 benchmark graph classification datasets. We compare our approach to the other graph classification paradigms: graph kernels, GNNs, and GTs. The empirical results confirm the validity of our approach: we can very efficiently perform tabularization and learn accurate classifiers.

\textbf{The main contributions of the paper can thus be summarized as follows:}
\begin{enumerate}
    \item A general framework ($\mathcal{Q}$-WL) for defining variants of the
    WL algorithm via generalized quantifiers, together with a novel
    generalization of the bisimulation game that exactly characterizes the
    expressive power of any such variant (Theorem~\ref{thm:generalized-quantifiers-characterization}).
    \item Experiments showing that graph classification based on this paradigm ($\mathcal{Q}$-WL-RF) achieves \begin{enumerate}
        \item predictive performance generally matching or exceeding that of graph kernels, GNNs and GTs;
        \item runtimes 5--20 times faster than those of GNNs and GTs, even without taking hyperparameter optimization into account;
        \item memory scaling on dataset size vastly better than that of graph kernels.
    \end{enumerate}
\end{enumerate}


\section{Preliminaries}
\label{sec:preliminaries}

\subsection{Logic}
 
A \textbf{graph} is a pair \(G = (V,E)\), where \(V\) is a \emph{finite} non-empty set and \(E \subseteq V \times V\). We allow directed edges and reflexive loops. Given \(w \in V\), we write \(N(w):=\{w' \in V \mid (w,w') \in E\}\) for the set of (out-)neighbours of \(w\).

Let $\Phi$ be a finite set of proposition symbols.
A \textbf{Kripke model} over~$\Phi$ is a triple $\mathfrak{M} = (V, E, \ell)$, where $(V, E)$ is a graph and $\ell \colon V \to \mathcal{P}(\Phi)$ is a labeling function.
We write $v \in \mathfrak{M}$ for $v \in V$ and call $(\mathfrak{M}, v)$ a \textbf{pointed Kripke model}. Thus a Kripke model over \(\Phi\) is just a labeled graph, labels being subsets of \(\Phi\).

The formulas of \textbf{graded modal logic} (GML) are generated by the following grammar:
\[
  \varphi \;::=\; \bot \mid p \mid \lnot\varphi \mid \varphi \land \varphi \mid \varphi \lor \varphi \mid \Diamond\varphi \mid \Diamond^{\geq k}\varphi
  \qquad (p \in \Phi,\; k \in \mathbb{Z}_+).
\]
We write $\Diamond^{=k}\varphi$ for $\Diamond^{\geq k}\varphi \land \lnot\Diamond^{\geq k+1}\varphi$.
\textbf{Modal logic} (ML) is the fragment without $\Diamond^{\geq k}$,
and \textbf{propositional logic} (PL) is the fragment without any modalities.
The \textbf{modal depth} $\mathrm{md}(\varphi)$ is defined as usual, with $\mathrm{md}(\Diamond\varphi) = \mathrm{md}(\Diamond^{\geq k}\varphi) = 1 + \mathrm{md}(\varphi)$.
We write $\mathrm{GML}^d$ for the set of GML-formulas of modal depth at most~$d$.

Given a Kripke model \(\mathfrak{M}\) and \(v \in \mathfrak{M}\), satisfaction \(\mathfrak{M},v \Vdash \varphi\) for \(\varphi \in \mathrm{GML}\) is defined as usual for the Boolean connectives, with the modal cases:
\begin{itemize}[nosep]
    \item \(\mathfrak{M},v \Vdash \lozenge \varphi \Leftrightarrow \mathfrak{M},u \Vdash \varphi\) for some \(u \in N(v)\),
    \item \(\mathfrak{M},v \Vdash \lozenge^{\geq k} \varphi \Leftrightarrow |\{u \in N(v) \mid \mathfrak{M},u \Vdash \varphi\}| \geq k\).
\end{itemize}

The expressive power of GML can be studied via \textbf{graded bisimulation}, which is a similarity measure between labeled graphs \cite{graded_modal_logic}. Following \cite{otto2023gradedmodallogiccounting}, we formulate it in terms of a game. The \textbf{graded bisimulation game} on the pointed Kripke models $(\mathfrak{M},w)$ and $(\mathfrak{N},w')$ is a two-player pebble game beginning in the position $(w,w')$. Each round proceeds from a position $(v,v')$ as follows:
\begin{enumerate}
    \item Player 1 wins immediately if $v$ and $v'$ do not satisfy the same proposition symbols.
    \item Player 1 chooses either \(X \subseteq N(v)\) or \(X \subseteq N(v')\). Player 2 responds with \(Y \subseteq N(v')\) or \(Y \subseteq N(v)\) respectively such that \(X\) and \(Y\) have the same size.
    \item Player 1 chooses \(u \in Y\) and Player 2 responds with \(u' \in X\). If \(u \in \mathfrak{M}\), the game continues from the position \((u,u')\); otherwise it continues from the position \((u',u)\).
\end{enumerate}
If the game lasts for infinitely many rounds, Player 2 wins. It is well-known that two pointed Kripke models $(\mathfrak{M},w)$ and $(\mathfrak{N},w')$ can be separated by GML if and only if Player \(1\) has a winning strategy in the graded-bisimulation game played on  $(\mathfrak{M},w)$ and $(\mathfrak{N},w')$. See, e.g., \cite{otto2023gradedmodallogiccounting} for the proof.

\subsection{Weisfeiler--Leman and Types}

We start by recalling the well-known (\(1\)-dimensional) Weisfeiler–Leman algorithm (WL) \cite{Weisfeiler:1968ve,DBLP:journals/jmlr/ShervashidzeSLMB11}. Given a labeled graph~$\mathfrak{M}$ and a depth $d \in \mathbb{N}$, WL iteratively refines the labeling of~$\mathfrak{M}$.
First, set $\mathfrak{M}_0 := \mathfrak{M}$.
Then, given~$\mathfrak{M}_i$ with labeling~$\ell_i$, two nodes $u, v$ receive the same label in~$\mathfrak{M}_{i+1}$ if and only if $\ell_i(u) = \ell_i(v)$ and
\[
  \{\!\{ \ell_i(w) \mid w \in N(u) \}\!\} \;=\; \{\!\{ \ell_i(w) \mid w \in N(v) \}\!\},
\]
where $\{\!\{\cdot\}\!\}$ denotes a multiset.
The process stops at~$\mathfrak{M}_d$.

The labels produced by WL correspond to \textbf{graded modal $d$-types}: $\mathrm{GML}^d$-formulas belonging to the set $\mathcal{T}^d_g$, which defined as follows. First, \(\mathcal{T}_g^0\) is the set of all formulas of the form
\[\bigwedge_{p \in \Psi} p \ \land \ \bigwedge_{p \not\in \Psi} \neg p,\]
where \(\Psi \subseteq \Phi\). We then define \(\mathcal{T}_g^{d + 1}\) as the set of all formulas of the form
\[
    \sigma \land \bigwedge_{\tau \in I} \lozenge^{=\mathrm{count}(\tau)} \tau \ \land \ \square \bigvee_{\tau \in I} \tau,
\]
where \(\sigma \in \mathcal{T}_g^0\), \(I \subseteq \mathcal{T}_g^d\) is a finite set and \(\mathrm{count}:I \to \mathbb{Z}_+\) is a function.

\begin{example}
Consider the following Kripke model (i.e. graph):
\begin{center}
\begin{tikzpicture}[node distance=1.1cm, world/.style={circle,draw,inner sep=2pt,minimum size=8mm}]
  \node[world, label=left:{\small 4}] (4) at (1.33,1.33) {$q$};
  \node[world, label=left:{\small 1}] (1) {$p$};
  \node[world, label=below:{\small 2}] (2) at (1.33,0) {$p$};
  \node[world, label=right:{\small 3}] (3) at (2.66,0) {$q$};
  \draw (1)--(2)--(3);
  \draw (4)--(2);
\end{tikzpicture}
\end{center}
\noindent The graded modal \(0\)-types partition the nodes by their labels: nodes \(1, 2\) have $0$-type \(p\) and nodes \(3, 4\) have $0$-type \(q\). After one round of WL, three distinct \(1\)-types emerge: Node \(1\) has $1$-type \( p \land \lozenge^{=1}p \land \square p\), node \(2\) has $1$-type \(p \land \lozenge^{=1}p\land\lozenge^{=2}q\land\square (p\lor q)\), and nodes $3$ and $4$ have $1$-type \(q \land \lozenge^{=1}p\land\square p\).
\end{example}

Every pointed Kripke model satisfies a unique graded modal $d$-type, and two pointed models satisfy the same $d$-type if and only if they agree on all of~$\mathrm{GML}^d$~\cite{otto2023gradedmodallogiccounting}.
Since at round~$d$ the WL algorithm computes the graded $d$-type of each node, WL-equivalence after $d$~rounds coincides with $\mathrm{GML}^d$-equivalence. In particular, since separation in WL and GML is always realized at some finite depth $d$, we can abstract away the specific $d$ and obtain the following.

\begin{proposition}\label{prop:theoretical-guarantee}
Let $(\mathfrak{M},w)$ and $(\mathfrak{N},w')$ be pointed Kripke models. The graded-bisimulation game is sound and complete in the sense that the following are equivalent:
\begin{enumerate}
    \item Player 1 has a winning strategy in the graded-bisimulation game played on  $(\mathfrak{M},w)$ and $(\mathfrak{N},w')$.
    \item $(\mathfrak{M},w)$ and $(\mathfrak{N},w')$ are separated by \(\mathrm{GML}\).
    \item $(\mathfrak{M},w)$ and $(\mathfrak{N},w')$ are separated by \(\mathrm{WL}\).
\end{enumerate}
\end{proposition}


\section{Method}
\label{sec:method-description}

\begin{figure*}[t]
\centering
\begin{tikzpicture}[scale=0.65, transform shape,
    node distance=0.8cm and 0.6cm,
    >={Stealth[length=2mm]},
    box/.style={draw, rounded corners, minimum height=0.9cm, minimum width=1.8cm, 
            align=center, font=\small},
    methodbox/.style={box, fill=blue!10, minimum width=1.8cm},
    databox/.style={box, fill=orange!12, minimum width=1.6cm},
    resultbox/.style={box, fill=green!12, minimum width=1.4cm},
    graphnode/.style={circle, draw, fill=gray!30, inner sep=1.5pt, minimum size=4pt},
    arrow/.style={->, thick, gray!70!black},
    sectionlabel/.style={font=\large\bfseries, text=black},
    dataset/.style={draw, dashed, rounded corners=3pt, inner sep=4pt, gray!70}
]

\node[sectionlabel] (gk-label) at (1, 0) {Weisfeiler--Leman Graph Kernel (WL-GK)};

\node[methodbox] (gk-wl) at (-4.5, -1.5) {\makecell{$d$ rounds\\of WL}};

\begin{scope}[shift={(-2.3, -1.5)}]
    \node[graphnode, fill=violet!70] (t1a) at (0,0.2) {};
    \node[graphnode, fill=orange!70] (t1b) at (0.35,0.45) {};
    \node[graphnode, fill=cyan!70] (t1c) at (0.4,-0.05) {};
    \node[graphnode, fill=pink!70] (t1d) at (0.75,0.25) {};
    \draw[thick] (t1a)--(t1b)--(t1d)--(t1c)--(t1a) (t1b)--(t1c);
    \node[graphnode, fill=yellow!70] (t2a) at (1.1,0.15) {};
    \node[graphnode, fill=lime!70] (t2b) at (1.4,0.45) {};
    \node[graphnode, fill=teal!70] (t2c) at (1.7,0.15) {};
    \draw[thick] (t2a)--(t2b)--(t2c)--(t2a);
    \node[graphnode, fill=brown!70] (t3a) at (0.4,-0.55) {};
    \node[graphnode, fill=magenta!70] (t3b) at (0.7,-0.35) {};
    \node[graphnode, fill=olive!70] (t3c) at (1.0,-0.55) {};
    \node[graphnode, fill=purple!70] (t3d) at (1.3,-0.35) {};
    \draw[thick] (t3a)--(t3b)--(t3c)--(t3d) (t3b)--(t3d);
    \node[dataset, fit=(t1a)(t1b)(t2b)(t2c)(t3a)(t3d), inner sep=6pt] {};
\end{scope}
\node[font=\scriptsize] at (-1.5, -2.6) {Refined node labels};

\node[methodbox] (kernel) at (1.9, -1.5) {Compute similarities\\for all graph pairs};

\node[databox, minimum width=2.4cm, minimum height=1.4cm] (kmatrix) at (5.2, -1.5) {};
\node[font=\tiny] at (5.2, -1.5) {$\begin{bmatrix} 0.67 & \cdots & 0.20 \\ \vdots & \ddots & \vdots \\ 0.96 & \dots & 0.11 \end{bmatrix}$};
\node[font=\scriptsize] at (5.2, -2.4) {Similarity matrix};

\node[methodbox, fill=purple!10] (svm) at (8.0, -1.5) {SVM};

\node[resultbox] (gk-class) at (10.5, -1.5) {Class\\labels};

\draw[arrow] (-6.65, -4.5) -- (gk-wl);
\draw[arrow] (gk-wl) -- (-2.6, -1.5);
\draw[arrow] (-0.25, -1.5) -- (kernel);
\draw[arrow] (kernel) -- (kmatrix);
\draw[arrow] (kmatrix) -- (svm);
\draw[arrow] (svm) -- (gk-class);

\node[sectionlabel] (gnn-label) at (1, -3.5) {Graph Neural Networks (GNNs) / Graph Transformers (GTs)};

\begin{scope}[shift={(-8.7, -5)}]
    \node[graphnode, fill=red!40] (gnn-1a) at (0,0.2) {};
    \node[graphnode, fill=blue!40] (gnn-1b) at (0.35,0.45) {};
    \node[graphnode, fill=red!40] (gnn-1c) at (0.4,-0.05) {};
    \node[graphnode, fill=green!40] (gnn-1d) at (0.75,0.25) {};
    \draw[thick] (gnn-1a)--(gnn-1b)--(gnn-1d)--(gnn-1c)--(gnn-1a) (gnn-1b)--(gnn-1c);
    \node[graphnode, fill=blue!40] (gnn-2a) at (1.1,0.15) {};
    \node[graphnode, fill=green!40] (gnn-2b) at (1.4,0.45) {};
    \node[graphnode, fill=red!40] (gnn-2c) at (1.7,0.15) {};
    \draw[thick] (gnn-2a)--(gnn-2b)--(gnn-2c)--(gnn-2a);
    \node[graphnode, fill=green!40] (gnn-3a) at (0.4,-0.55) {};
    \node[graphnode, fill=blue!40] (gnn-3b) at (0.7,-0.35) {};
    \node[graphnode, fill=red!40] (gnn-3c) at (1.0,-0.55) {};
    \node[graphnode, fill=blue!40] (gnn-3d) at (1.3,-0.35) {};
    \draw[thick] (gnn-3a)--(gnn-3b)--(gnn-3c)--(gnn-3d) (gnn-3b)--(gnn-3d);
    \node[dataset, fit=(gnn-1a)(gnn-1b)(gnn-2b)(gnn-2c)(gnn-3a)(gnn-3d), inner sep=6pt] {};
\end{scope}
\node[font=\small] at (-7.8, -6.2) {Set of $n$ graphs};

\node[methodbox, minimum width=2.2cm] (mp) at (-4.5, -5) {$L$ rounds of message\\passing / attention};

\begin{scope}[shift={(-1.8, -5)}]
    \node[graphnode, fill=violet!70] (mp1a) at (0,0.2) {};
    \node[graphnode, fill=orange!70] (mp1b) at (0.35,0.45) {};
    \node[graphnode, fill=cyan!70] (mp1c) at (0.4,-0.05) {};
    \node[graphnode, fill=pink!70] (mp1d) at (0.75,0.25) {};
    \draw[thick] (mp1a)--(mp1b)--(mp1d)--(mp1c)--(mp1a) (mp1b)--(mp1c);
    \node[graphnode, fill=yellow!70] (mp2a) at (1.1,0.15) {};
    \node[graphnode, fill=lime!70] (mp2b) at (1.4,0.45) {};
    \node[graphnode, fill=teal!70] (mp2c) at (1.7,0.15) {};
    \draw[thick] (mp2a)--(mp2b)--(mp2c)--(mp2a);
    \node[graphnode, fill=brown!70] (mp3a) at (0.4,-0.55) {};
    \node[graphnode, fill=magenta!70] (mp3b) at (0.7,-0.35) {};
    \node[graphnode, fill=olive!70] (mp3c) at (1.0,-0.55) {};
    \node[graphnode, fill=purple!70] (mp3d) at (1.3,-0.35) {};
    \draw[thick] (mp3a)--(mp3b)--(mp3c)--(mp3d) (mp3b)--(mp3d);
    \node[dataset, fit=(mp1a)(mp1b)(mp2b)(mp2c)(mp3a)(mp3d), inner sep=6pt] {};
\end{scope}
\node[font=\scriptsize] at (-0.94, -6.15) {Updated node embeddings};

\node[methodbox] (pool) at (2.2, -5) {Pooling};

\node[databox, minimum width=1.8cm, minimum height=1.0cm] (gembed) at (5.2, -5) {$\mathbf{h}_{G_1}, \ldots, \mathbf{h}_{G_n}$};
\node[font=\scriptsize] at (5.2, -5.9) {\makecell{High-dimensional\\graph embeddings}};

\node[methodbox, fill=purple!10, minimum width=1.2cm] (mlp) at (8.0, -5) {MLP};

\node[resultbox] (gnn-class) at (10.5, -5) {Class\\labels};

\draw[arrow] (-6.65, -5) -- (mp);
\draw[arrow] (mp) -- (-2.1, -5);
\draw[arrow] (0.25, -5) -- (pool);
\draw[arrow] (pool) -- (gembed);
\draw[arrow] (gembed) -- (mlp);
\draw[arrow] (mlp) -- (gnn-class);

  \node[sectionlabel] (wl-label) at (1, -7) {$\mathcal{Q}$-WL-RF (Ours)};

  \node[methodbox]
      (wl) at (-4.5, -8.5) {\makecell{$d_v$ rounds of $\mathcal{Q}$-WL\\for each variant $v$}};

  \begin{scope}[shift={(-1.8, -8.5)}]
      \begin{scope}[shift={(0.18, -0.18)}, opacity=0.35]
      \node[graphnode, fill=violet!70] (g1a) at (0,0.2) {};
      \node[graphnode, fill=orange!70] (g1b) at (0.35,0.45) {};
      \node[graphnode, fill=cyan!70] (g1c) at (0.4,-0.05) {};
      \node[graphnode, fill=pink!70] (g1d) at (0.75,0.25) {};
      \draw[thick] (g1a)--(g1b)--(g1d)--(g1c)--(g1a) (g1b)--(g1c);
      \node[graphnode, fill=yellow!70] (g2a) at (1.1,0.15) {};
      \node[graphnode, fill=lime!70] (g2b) at (1.4,0.45) {};
      \node[graphnode, fill=teal!70] (g2c) at (1.7,0.15) {};
      \draw[thick] (g2a)--(g2b)--(g2c)--(g2a);
      \node[graphnode, fill=brown!70] (g3a) at (0.4,-0.55) {};
      \node[graphnode, fill=magenta!70] (g3b) at (0.7,-0.35) {};
      \node[graphnode, fill=olive!70] (g3c) at (1.0,-0.55) {};
      \node[graphnode, fill=purple!70] (g3d) at (1.3,-0.35) {};
      \draw[thick] (g3a)--(g3b)--(g3c)--(g3d) (g3b)--(g3d);
      \node[dataset, fit=(g1a)(g1b)(g2b)(g2c)(g3a)(g3d), inner sep=6pt] {};
      \end{scope}
      \begin{scope}[shift={(0.09, -0.09)}, opacity=0.6]
      \node[graphnode, fill=violet!70] (h1a) at (0,0.2) {};
      \node[graphnode, fill=orange!70] (h1b) at (0.35,0.45) {};
      \node[graphnode, fill=cyan!70] (h1c) at (0.4,-0.05) {};
      \node[graphnode, fill=pink!70] (h1d) at (0.75,0.25) {};
      \draw[thick] (h1a)--(h1b)--(h1d)--(h1c)--(h1a) (h1b)--(h1c);
      \node[graphnode, fill=yellow!70] (h2a) at (1.1,0.15) {};
      \node[graphnode, fill=lime!70] (h2b) at (1.4,0.45) {};
      \node[graphnode, fill=teal!70] (h2c) at (1.7,0.15) {};
      \draw[thick] (h2a)--(h2b)--(h2c)--(h2a);
      \node[graphnode, fill=brown!70] (h3a) at (0.4,-0.55) {};
      \node[graphnode, fill=magenta!70] (h3b) at (0.7,-0.35) {};
      \node[graphnode, fill=olive!70] (h3c) at (1.0,-0.55) {};
      \node[graphnode, fill=purple!70] (h3d) at (1.3,-0.35) {};
      \draw[thick] (h3a)--(h3b)--(h3c)--(h3d) (h3b)--(h3d);
      \node[dataset, fit=(h1a)(h1b)(h2b)(h2c)(h3a)(h3d), inner sep=6pt] {};
      \end{scope}
      \node[graphnode, fill=violet!70] (cwl1a) at (0,0.2) {};
      \node[graphnode, fill=orange!70] (cwl1b) at (0.35,0.45) {};
      \node[graphnode, fill=cyan!70] (cwl1c) at (0.4,-0.05) {};
      \node[graphnode, fill=pink!70] (cwl1d) at (0.75,0.25) {};
      \draw[thick] (cwl1a)--(cwl1b)--(cwl1d)--(cwl1c)--(cwl1a) (cwl1b)--(cwl1c);
      \node[graphnode, fill=yellow!70] (cwl2a) at (1.1,0.15) {};
      \node[graphnode, fill=lime!70] (cwl2b) at (1.4,0.45) {};
      \node[graphnode, fill=teal!70] (cwl2c) at (1.7,0.15) {};
      \draw[thick] (cwl2a)--(cwl2b)--(cwl2c)--(cwl2a);
      \node[graphnode, fill=brown!70] (cwl3a) at (0.4,-0.55) {};
      \node[graphnode, fill=magenta!70] (cwl3b) at (0.7,-0.35) {};
      \node[graphnode, fill=olive!70] (cwl3c) at (1.0,-0.55) {};
      \node[graphnode, fill=purple!70] (cwl3d) at (1.3,-0.35) {};
      \draw[thick] (cwl3a)--(cwl3b)--(cwl3c)--(cwl3d) (cwl3b)--(cwl3d);
      \node[dataset, fit=(cwl1a)(cwl1b)(cwl2b)(cwl2c)(cwl3a)(cwl3d), inner sep=6pt] {};
  \end{scope}
  \node[font=\scriptsize] at (-1.0, -9.9) {$\mathcal{Q}$-refined labels (per variant)};

  \node[methodbox] (count) at (2.2, -8.5) {Count types\\\& concatenate};

\node[databox, minimum width=2.6cm, minimum height=1.6cm] (table) at (5.2, -8.5) {};
\node[font=\tiny] at (5.2, -8.5) {%
$\begin{array}{c|ccc}
 & \tau_1 & \cdots & \tau_k \\
\hline
G_1 & 3 & \cdots & 1\\
\vdots & \vdots & \ddots & \vdots \\
G_n & 0 & \cdots & 2
\end{array}$};
\node[font=\scriptsize] at (5.2, -9.5) {Tabular dataset};

  \node[methodbox, fill=purple!10, minimum width=1.6cm] (rf) at (8.0, -8.5) {Random\\Forest};

  \node[resultbox] (wl-class) at (10.5, -8.5) {Class\\labels};

  \draw[arrow] (-6.65, -5.5) -- (wl);
  \draw[arrow] (wl) -- (-2.1, -8.5);
  \draw[arrow] (0.25, -8.5) -- (count);
  \draw[arrow] (count) -- (table);
  \draw[arrow] (table) -- (rf);
  \draw[arrow] (rf) -- (wl-class);

\end{tikzpicture}
\caption{Comparison of three graph classification paradigms.
}
\label{fig:paradigms}
\end{figure*}
 
Our method, $\mathcal{Q}$\textbf{-WL-RF}, is compared to the other graph classification paradigms in Figure \ref{fig:paradigms}. Below, we give a step-by-step description of how it works. 


\subsection{Cruder variants of WL}\label{subsec:crude-wl}

As described in Section \ref{sec:intro}, a potential drawback of standard WL (which we call \textbf{Full WL}) is that the labels it produces can be too fine-grained.
Since Full WL counts neighbours exactly, it distinguishes two nodes that differ only in having, say, 234 versus 235 neighbours of a given type---a distinction unlikely to be useful for classification.
On high-degree nodes, this can lead to an explosion of rare types that hurts generalization.

To mitigate this, we propose two variants of the WL-algorithm which use cruder forms of counting. Both variants follow the same iterative refinement structure: at each round, two nodes receive the same label if and only if they had the same label in the previous round and the \emph{modalities in use} agree about their neighbourhoods.
\begin{enumerate}
    \item \textbf{Plain WL}. Corresponds to plain modal logic ML and captures only \textit{which} types are realized in the neighbourhood, without any counting information.
    \item \textbf{Majority WL}. Extends Plain WL by also separately capturing those types that are realized in a strict majority of the neighbourhood.
\end{enumerate}

The general framework unifying these variants is formalized in Section~\ref{sec:arbitrary-quantifiers}, in which we give an analogue of Proposition \ref{prop:theoretical-guarantee}, providing a formal guarantee that the WL-variants distinguish exactly those nodes separable in their corresponding modal logics.

\subsection{Tabularization via type counting}

Let $\mathcal{D} = \{(G_1, y_1), \ldots, (G_n, y_n)\}$ be a set of labeled training
graphs and $\mathcal{D}_{\mathrm{test}}=\{G'_1,\dots,G'_m\}$ be a set of test graphs.
$\mathcal{Q}$-WL-RF combines features from a fixed set of $\mathcal{Q}$-WL
variants, each corresponding to a different choice of generalized quantifiers
in the underlying logic. Let $\mathcal{V}$ be this set of variants;
in our experiments,
$\mathcal{V} = \{\text{Full}, \text{Plain}, \text{Majority}\}$.
For each variant $v \in \mathcal{V}$, we independently select a refinement
depth $d_v \in \mathbb{N}$ by validation-set performance, and then run the
three steps below.

\paragraph{Step 1: Per-variant $\mathcal{Q}$-WL refinement.}
For each variant $v \in \mathcal{V}$, we run $d_v$~rounds of the corresponding
$\mathcal{Q}$-WL refinement on all training and test graphs.
Each node $u$ thus obtains a label $\ell_v^{(d_v)}(u)$ corresponding to
its graded $\mathcal{Q}$-modal $d_v$-type under variant $v$.

\paragraph{Step 2: Counting and concatenation.}
For each variant $v \in \mathcal{V}$, let
$\mathcal{T}_v = \{\tau^v_1, \ldots, \tau^v_{k_v}\}$
be the set of distinct node labels appearing in the training graphs after
refinement.
For each graph~$G$ (training or test), we define its variant-$v$ feature
vector $\mathbf{x}^v_G \in \mathbb{N}^{k_v}$ by
\[
\mathbf{x}^v_G[\tau] \;:=\; |\{u \in G \mid \ell_v^{(d_v)}(u) = \tau\}|
\qquad \text{for each } \tau \in \mathcal{T}_v.
\]
Types that appear in test graphs but not in any training graph are discarded,
since the classifier has no training signal for them.
The final feature vector of $G$ is the concatenation
\[
\mathbf{x}_G \;:=\; \bigl( \mathbf{x}^{v_1}_G \,\big|\, \mathbf{x}^{v_2}_G
  \,\big|\, \cdots \,\big|\, \mathbf{x}^{v_{|\mathcal{V}|}}_G \bigr)
\;\in\; \mathbb{N}^{k},
\]
where $k = \sum_{v \in \mathcal{V}} k_v$.
The result is a standard tabular dataset with $k$ features, $n$ training rows
and $m$ test rows.

\paragraph{Step 3: Classification.}
We train a tabular classifier on the vectors $\{(\mathbf{x}_{G_i}, y_i)\}_{i=1}^n$
and use it to predict labels for the test graphs.
We use random forests~\cite{DBLP:journals/ml/Breiman01} with default
hyperparameters, as they are robust to high-dimensional features and perform well without tuning.
However, the framework is modular: the random forest can be replaced by any
tabular classifier without modifying the other phases.

Unlike some graph kernel approaches, we do not use these feature vectors to compute a similarity metric between graphs, but rather provide them directly to the classifier. Since we do not compute similarities for all graph pairs, which takes quadratic time with respect to the number of graphs, our preprocessing step is much more efficient on large datasets (see Figure \ref{fig:memory}).


\section{Generalizing Weisfeiler–Leman with Arbitrary Quantifiers}
\label{sec:arbitrary-quantifiers}

We now formalize the variants from Section~\ref{subsec:crude-wl} within a single framework
and establish an exact characterization of the expressive power of the resulting algorithms. This covers a \textit{very comprehensive} collection of ways of generalizing WL.

A \textbf{generalized quantifier} is an isomorphism-closed class \(Q\) of structures \((D,P)\) with domain \(D\) and a unary relation \(P\subseteq D\). Each generalized quantifier \(Q\) gives rise to a \textbf{generalized modality} \(\langle Q \rangle\). A formula of the form \(\langle Q \rangle \varphi\) is interpreted in a pointed Kripke model such that $\mathfrak{M}, v \Vdash \GQ \varphi$ if and only if $(N(v),\{u \mid u\in N(v) \text{ and } \mathfrak{M},u\Vdash \varphi\})\in Q.$
We use PL(\(\mathcal{Q}\)) to denote the extension of propositional logic with the generalized modalities from a collection \(\mathcal{Q}\) of 
generalized quantifiers.

\begin{example}\label{ex:quantifiers}
  The diamond~$\Diamond$ corresponds to $\langle \exists \rangle$ where
  $\exists = \{(D, P) \mid P \neq \varnothing\}$.
  The counting modality $\Diamond^{\geq k}$ corresponds to $\langle \exists^{\geq k} \rangle$ where
  $\exists^{\geq k} = \{(D, P) \mid |P| \geq k\}$.
  Majority WL uses the majority modality $\langle \mathrm{Maj} \rangle$, where
  $\mathrm{Maj} = \{(D, P) \mid |P| > |D|/2\}$.
  Full WL corresponds to $\mathcal{Q}_{\mathrm{graded}} = \{\exists^{\geq k} \mid k \in \mathbb{N}\}$.
\end{example}

Recall that Full WL can separate pointed structures that can be distinguished in GML. We will next generalize this by describing how to modify the WL-algorithm so that it captures the expressive power of PL(\(\mathcal{Q}\)). The resulting WL-algorithm is called \(\mathcal{Q}\)\textbf{-WL}.

Let $(V_1,E_1,\ell_1)$ and $(V_2,E_2,\ell_2)$ be labeled graphs and $u\in V_1$ and $v\in V_2$. We say that $u$ and $v$ are $\mathcal{Q}$\textbf{-equivalent} if $\ell_1(u)=\ell_2(v)$ and for every \(Q \in \mathcal{Q}\) and every \(C \subseteq \mathrm{range}(\ell_1) \cup \mathrm{range}(\ell_2)\), we have 
\[
  (N(v_1),\; \{u \in N(v_1) \mid \ell_1(u) \in C\}) \in Q
  \;\;\Longleftrightarrow\;\;
  (N(v_2),\; \{u \in N(v_2) \mid \ell_2(u) \in C\}) \in Q.
\]

As an input, \(\mathcal{Q}\)-WL receives a labeled graph \(\mathfrak{M}\) and a depth \(d\in \mathbb{N}\). First, set $\mathfrak{M}_0 :=  \mathfrak{M}$. Then, given $\mathfrak{M}_i$ with labeling $\ell_i$, two nodes receive the same label in $\mathfrak{M}_{i+1}$ if and only if the nodes are \(\mathcal{Q}\)-equivalent in $\mathfrak{M}_i$. The process stops at $\mathfrak{M}_d$.

When $\mathcal{Q}$ is infinite and unrestricted, $\mathcal{Q}$-WL may be uncomputable. In Appendix \ref{appendix:effectively-finite}, we define what it means for $\mathcal{Q}$ to be \textbf{effectively finite}: intuitively, noting that for nodes with at most $N$ neighbours only finitely many modalities are non-equivalent, we require a Turing machine that can identify such a finite set for any $N$. As an example, $\mathcal{Q}_\text{graded}:=\{\exists^{\geq k}\mid k\in\mathbb{N}\}$ is effectively finite, which is expected since $\mathcal{Q}_{\text{graded}}$-WL is just Full WL.

We now define how we extend the bisimulation game to characterize $\mathrm{PL}(\mathcal{Q})$. The $\mathcal{Q}$\textbf{-bisimulation game} on the pointed Kripke models $(\mathfrak{M},w)$ and $(\mathfrak{N},w')$ is a two-player pebble game beginning in the position $(w,w')$. 
Player 1 starts as the \textbf{attacker} and Player 2 as the \textbf{defender}. Each round proceeds from a position $(v,v')$ as follows:
\begin{enumerate}
    \item The attacker wins immediately if $v$ and $v'$ do not satisfy the same proposition symbols.
    \item The attacker chooses a quantifier $Q \in \mathcal{Q}$ and one of the two positions; without loss of generality, assume the attacker chooses $v$. The attacker then chooses a \textbf{witness set} $X\subseteq N(v)$ such that $(N(v),X)\in Q$. The attacker also chooses a \textbf{spillover set} $P\subseteq N(v')$. \textit{Intuitively, $X$ consists of nodes that satisfy some formula $\psi$ and $P$ consists of those types that imply $\psi$ but are not realized in $N(v)$.}
    \item The defender chooses a corresponding witness set $X'\subseteq N(v')$ such that $(N(v'),X')\in Q$ and $P\subseteq X'$.
    \item The attacker chooses one of the following:
    \begin{itemize}
        \item Choose $u\in N(v')\setminus X'$ and $u'\in X$. Swap attacker and defender roles between the players, then a new round begins from the position $(u,u')$.
        \item Choose $u\in X'$, after which the defender chooses $u'\in X\cup P$. Then a new round begins from the position $(u,u')$.
    \end{itemize}
\end{enumerate}

At any point after a set is chosen, the opposing player may instead contest it as follows:
\begin{itemize}
    \item After a witness set $Y\subseteq N(z)$ is chosen, contest that it breaks equivalence by choosing $u\in Y$ and $u'\in N(z) \setminus Y$. If the contesting player is the attacker, swap attacker and defender roles between the players. Then a new round begins from the position $(u,u')$.
    \item After a spillover set $P\subseteq N(v')$ is chosen, contest that it contains a type realized in $N(v)$ by choosing $u\in P$ and $u'\in N(v)$. Then a new round begins from the position $(u,u')$.
\end{itemize}

If at any point a player has no legal moves, then that player immediately loses the game.

\begin{theorem}\label{thm:generalized-quantifiers-characterization}
    Let \((\mathfrak{M},w)\) and $(\mathfrak{N},w')$ be pointed Kripke models. Then the $\mathcal{Q}$-bisimulation game is sound and complete in the sense that the following are equivalent:
    \begin{enumerate}
        \item Player 1 has a winning strategy in the $\mathcal{Q}$-bisimulation game played on $(\mathfrak{M},w)$ and $(\mathfrak{N},w')$.
        \item $(\mathfrak{M},w)$ and $(\mathfrak{N},w')$ are separated by \(\mathrm{PL}(\mathcal{Q})\).
        \item $(\mathfrak{M},w)$ and $(\mathfrak{N},w')$ are separated by $\mathcal{Q}$-$\mathrm{WL}$.
    \end{enumerate}
\end{theorem}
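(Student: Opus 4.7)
The plan is to prove the equivalence as a cycle $(2) \Rightarrow (3) \Rightarrow (1) \Rightarrow (2)$. Each condition is naturally graded by a parameter $d \in \mathbb{N}$: the separating formula in $\PL(\mathcal{Q})$ has some modal depth $d$, $\mathcal{Q}$-WL is run for $d$ rounds, and the game is played for $d$ rounds. Throughout, the key auxiliary notion will be a \emph{$\mathcal{Q}$-modal $d$-type}, defined recursively in analogy with $\mathcal{T}^d_g$ and $\mathcal{T}^d_{\gend{Maj}}$: the $(d+1)$-type of a node is a conjunction of a propositional $0$-type together with, for each (effective) representative $Q$ of $\mathcal{Q}$ and each finite set $J$ of previously constructed $d$-types, either $\langle Q \rangle \bigvee_{\tau \in J} \tau$ or its negation. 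Effective finiteness of $\mathcal{Q}$ will guarantee that, on any fixed neighborhood, only finitely many nonequivalent choices of $Q$ matter, so the construction is well-defined.

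For $(2) \Rightarrow (3)$, I would induct on a formula $\varphi \in \PL(\mathcal{Q})$ of modal depth $d$ to show that if two nodes receive the same $\mathcal{Q}$-WL label after $d$ rounds, then they agree on $\varphi$. The propositional base cases are immediate because $\mathcal{Q}$-WL preserves the initial labeling. For the inductive case $\GQ \psi$, the induction hypothesis says that, in each model, the set of neighbors satisfying $\psi$ is a union of certain $\mathcal{Q}$-WL label classes, i.e., equals $\{u \in N(v) \mid \ell(u) \in C\}$ for some fixed $C \subseteq \mathrm{range}(\ell_1) \cup \mathrm{range}(\ell_2)$. Since $\mathcal{Q}$-equivalence quantifies over all such $C$, the two pairs $(N(v_i),\{u \in N(v_i) \mid \ell(u) \in C\})$ either both lie in $Q$ or both avoid it. Contrapositively, a separating formula forces $\mathcal{Q}$-WL to distinguish the two labels. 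For $(3) \Rightarrow (2)$, I would observe that the $\mathcal{Q}$-WL label of a node after $d$ rounds is itself captured by its $\mathcal{Q}$-modal $d$-type, a formula of $\PL(\mathcal{Q})$ of modal depth $d$; if the labels differ, the types witness separation.

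For $(1) \Leftrightarrow (3)$, which closes the cycle via $(1) \Rightarrow (2)$, the $\mathcal{Q}$-bisimulation game should be designed so that a single round mirrors one refinement step of $\mathcal{Q}$-WL: Player~1 challenges by producing a $Q \in \mathcal{Q}$ and a colour set $C$ witnessing failure of $\mathcal{Q}$-equivalence at the current pair, and Player~2 responds by choosing a pair of neighbors that must remain matched in the next round. A straightforward induction on $d$ then gives that Player~2 has a winning strategy in the $d$-round game on $(w,v)$ iff $w$ and $v$ agree on their $\mathcal{Q}$-WL label after $d$ rounds. Combining with $(3) \Rightarrow (2)$ one obtains $(1) \Rightarrow (2)$: if Player~1 wins in $d$ rounds then $\mathcal{Q}$-WL separates at round $d$, and hence so does the corresponding $\mathcal{Q}$-modal $d$-type.

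The hardest step will be the inductive clause for $\GQ \psi$ in $(2) \Rightarrow (3)$: I must argue that the set $C$ of labels ``satisfying $\psi$'' can be chosen as the \emph{same} set on both sides simultaneously. This calls for a symmetric application of the induction hypothesis together with an appeal to effective finiteness of $\mathcal{Q}$, so as to cap the collection of quantifiers to be considered at each neighborhood size. A secondary subtlety is that $\PL(\mathcal{Q})$ has a propositional base rather than a modal base, which means the construction of $\mathcal{Q}$-modal types and the game must be layered carefully so that no information leaks across levels of nesting in unintended ways.
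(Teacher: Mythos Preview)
Your $(2)\Leftrightarrow(3)$ is sound and in fact slightly more direct than the paper's route: you prove $(2)\Rightarrow(3)$ by a straightforward formula induction (the paper gets this only via the game), and $(3)\Rightarrow(2)$ via types, matching what the paper does. Your ``same $C$ on both sides'' worry is not the hard step---the paper dissolves it by computing the type $\varphi^{\mathcal{Q},d}_{\mathfrak{M}\sqcup\mathfrak{N},w}$ in the \emph{disjoint union}, so the depth-$d$ labels are shared from the outset and $C$ is globally well-defined.

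The genuine gap is the game. Your one-line sketch---Player~1 names $Q$ and a ``colour set $C$'' witnessing failure of $\mathcal{Q}$-equivalence, Player~2 picks a neighbour pair---either presupposes the WL colouring (which would make $(1)\Leftrightarrow(3)$ a tautology rather than a model-theoretic characterisation) or, read as a choice of subsets of neighbours, does not work for non-monotone $Q$: for a quantifier such as ``exactly two neighbours satisfy $\varphi$'', a single subset-and-pair exchange cannot isolate why $\GQ\varphi$ holds on one side and fails on the other. The paper's actual $\mathcal{Q}$-bisimulation game is considerably more elaborate: in one round the attacker picks $X\subseteq N(w)$ with $(N(w),X)\in Q$ \emph{and} an auxiliary set $P\subseteq N(v)$; the defender may \emph{contest} either choice (placing the pebbles on a disputed pair and swapping roles), or else must produce $X'\supseteq P$ with $(N(v),X')\in Q$; the attacker then argues $X'$ is too small (moving into $N(v)\setminus X'$ and $X$, becoming defender) or too large (moving into $X'$, defender answering from $X\cup P$). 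The contestion mechanism is what forces chosen sets to be unions of lower-depth equivalence classes, and the role swaps supply the symmetry needed for arbitrary $Q$. Consequently the paper's nontrivial work sits exactly where you are vaguest: one lemma builds a Player~1 strategy from a separating formula, and another builds a Player~2 strategy from agreement of the types $\varphi^{\mathcal{Q},d}_{\mathfrak{M}\sqcup\mathfrak{N},\cdot}$---the formula induction you flagged as hardest is comparatively routine.
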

\begin{proof}[Proof sketch.]
The proof is an induction on $d$, which represents the number of rounds of the $\mathcal{Q}$-bisimulation game, the modal depth of the $\mathrm{PL}(\mathcal{Q})$-formula and the number of rounds of $\mathcal{Q}$-WL (corresponding to the node satisfying a \textbf{$\mathcal{Q}$-modal $d$-type}, a generalization of the graded modal $d$-type). The base case is clear, since 1., 2. and 3. all essentially state that $w$ and $w'$ satisfy the same proposition symbols. The induction case requires more work and is proved in Appendix \ref{generalized-quantifiers-proof} via three technical Lemmas.
\end{proof}
Our definition of a generalized quantifier can be extended further in two ways. First, we may allow them to have an arbitrary (but finite) \textbf{quantifier width}, covering, e.g., the Rescher quantifier $\mathrm{More} = \{(D, P_1, P_2) \mid |P_1| > |P_2|\}$ of width $2$, expressing ``more neighbours satisfy~$\varphi$ than~$\psi$''. Second, we may allow them to involve arbitrary binary relations over the domain. This enables us to encode, e.g., the global modality using the universal relation $U = V \times V$.

Theorem \ref{thm:generalized-quantifiers-characterization} holds even when $\mathcal{Q}$ contains arbitrary finite-width quantifiers over arbitrary binary relations; the proof simply generalizes the game to use $n$-tuples of sets instead of individual sets, where $n$ is the quantifier width (see Appendix \ref{appendix:finite-width} for details).


\section{Experiments}
\label{sec:experiments}

\subsection{Experimental setting}

We evaluate $\mathcal{Q}$-WL-RF with the three quantifier sets from
Section~\ref{subsec:crude-wl}, using random forests~\cite{DBLP:journals/ml/Breiman01}
with default hyperparameters for tabular classification.
We compare to three other paradigms: the WL subtree kernel
(WL-GK)~\cite{DBLP:journals/jmlr/ShervashidzeSLMB11} with an SVM classifier;
graph convolutional networks (GCNs)~\cite{DBLP:conf/iclr/KipfW17} and graph
isomorphism networks (GINs)~\cite{xu2018powerful}; and the graph transformers
GraphGPS~\cite{rampavsek2022recipe} and Exphormer \cite{shirzad2023exphormer}. For a strong GNN baseline, we further enhance the standard GCN and GIN architectures following the GCN+ and
GIN+ recipe of \cite{luocan}, adding edge features, residual connections,
per-layer feed-forward networks with batch normalization, positional encodings and a sum-pooling
MLP head.

In addition to predictive performance, we measure wall-clock time and peak memory
per train/validation/test split.
All experiments were run on a single machine with a 24-core CPU, 128\,GB RAM, and
an NVIDIA RTX PRO 6000 GPU (96\,GB VRAM); implementation details and library
versions are listed in Appendix~\ref{appendix-hyperparameters}.


\subsection{Datasets}\label{datasets}

We use 14 graph classification datasets: 7 from TUDataset~\cite{tudatasetbench2020},
6 from the Open Graph Benchmark (OGB)~\cite{hu2020ogb}, and \texttt{Peptides-func}
from the Long Range Graph Benchmark (LRGB)~\cite{dwivedi2022LRGB}.
These span molecular, biological, and social network domains and include both binary
and multi-label tasks; all contain ${\geq}\,1000$ graphs.

For datasets with predefined splits (OGB, LRGB), we use them and repeat each experiment three times with different seeds.
For TU datasets, we use stratified 10-fold cross-validation.
Dataset statistics are in Appendix~\ref{appendix-hyperparameters}.

Our method and WL-GK operate on discrete node labels only: for those methods, we discard floating-point
node attributes (present in, e.g., \texttt{PROTEINS}) and do not use edge attributes.
GNN and GT baselines have access to both. For datasets without node labels, such as \texttt{IMDB-BINARY}, we follow usual convention and assign each node's out-degree as its initial label. For datasets with multiple classification tasks, such as \texttt{ogbg-molsider}, our methods and WL-GK train separate classifiers for each task, and GNN and GT methods train jointly on all tasks using binary cross-entropy loss.


\subsection{Hyperparameter optimization}

For all neural methods, we perform hyperparameter optimization using the
Tree-structured Parzen Estimator (TPE) implemented in
Optuna~\cite{optuna_2019}, subject to a wall-clock budget of 30~minutes
per method per dataset.
Each trial is evaluated on the validation split; the best configuration is
retrained for the full epoch budget on the complete training set.
The search covers architecture size (hidden dimension, number of layers),
regularization (dropout, weight decay), and optimization parameters (learning
rate, batch size); all neural methods use the AdamW optimizer with cosine
annealing preceded by a linear warmup.
Full search spaces and training details are given in
Appendix~\ref{appendix-hyperparameters}.

A benefit of our method is that hyperparameter tuning is minimal, since
  the only hyperparameters are the refinement depths $d_v \in \{0, \ldots, 10\}$,
  one per $\mathcal{Q}$-WL variant $v \in \mathcal{V}$.
  For $\mathcal{Q}$-WL-RF we select each $d_v$, and for WL-GK the single depth $d$,
  using a random 50\%/50\% train/validation split if predefined splits are not available,
  following \cite{DBLP:journals/jmlr/ShervashidzeSLMB11}. We stop increasing the number of rounds if the validation accuracy does not increase. $\mathcal{Q}$-WL-RF and WL-GK use completely unoptimized default hyperparameters for the random forest and SVM classifiers.


\subsection{Results}

\textbf{1. Our approach generally obtains results comparable to the other paradigms, and in some cases clearly better ones.} Raw results can be found in Table~\ref{tab:results}. The critical difference diagram in Figure \ref{fig:criticaldifference} shows that overall, our method outranks WL-GK and has no statistically significant difference to the GNNs or GTs. We also obtain some clear wins on multiple OGB datasets, as well as on \texttt{Peptides-func}. On the other hand, on \texttt{ogbg-molclintox}, our method vastly underperforms compared to neural methods.

\begin{figure}[t]
\caption{Critical difference diagram comparing method test performance \cite{demvsar2006statistical}. The six methods are listed by mean rank, lower being better. Methods with no statistically significant difference, as determined by the Nemenyi test with significance $\alpha=0.05$, are connected with a bar.}
\centering
\begin{tikzpicture}

  \tikzset{
    axis/.style={line width=0.8pt},
    tick/.style={line width=0.8pt},
    connector/.style={line width=0.6pt},
    sigbar/.style={line width=2.5pt},
    label/.style={font=\small},
    ranklabel/.style={font=\scriptsize, text=gray},
    methoddot/.style={fill=black, circle, inner sep=1.2pt},
  }

  \draw[axis] (2.5,0) -- (7.5,0);

  \draw[tick] (7.500,0) -- (7.500,0.25);
  \node[above] at (7.500,0.25) {1};
  \draw[tick] (6.500,0) -- (6.500,0.25);
  \node[above] at (6.500,0.25) {2};
  \draw[tick] (5.500,0) -- (5.500,0.25);
  \node[above] at (5.500,0.25) {3};
  \draw[tick] (4.500,0) -- (4.500,0.25);
  \node[above] at (4.500,0.25) {4};
  \draw[tick] (3.500,0) -- (3.500,0.25);
  \node[above] at (3.500,0.25) {5};
  \draw[tick] (2.500,0) -- (2.500,0.25);
  \node[above] at (2.500,0.25) {6};
  \draw[tick] (7.000,0) -- (7.000,0.15);
  \draw[tick] (6.000,0) -- (6.000,0.15);
  \draw[tick] (5.000,0) -- (5.000,0.15);
  \draw[tick] (4.000,0) -- (4.000,0.15);
  \draw[tick] (3.000,0) -- (3.000,0.15);

  \node[methoddot] at (6.321,0) {};
  \node[methoddot] at (5.571,0) {};
  \node[methoddot] at (4.7,0) {};
  \node[methoddot] at (4.8,0) {};
  \node[methoddot] at (4.500,0) {};
  \node[methoddot] at (4.143,0) {};

  \draw[connector] (4.143,0) -- (4.143,-0.300) -- (2.300,-0.300);
  \node[label, anchor=east] at (2.300,-0.300) {WL-GK {\scriptsize (4.4)}};
  \draw[connector] (4.500,0) -- (4.500,-0.700) -- (2.300,-0.700);
  \node[label, anchor=east] at (2.300,-0.700) {GCN+ {\scriptsize (4.0)}};
  \draw[connector] (4.7,0) -- (4.7,-1.100) -- (2.300,-1.100);
  \node[label, anchor=east] at (2.300,-1.100) {GPS {\scriptsize (3.8)}};
  \draw[connector] (6.321,0) -- (6.321,-0.300) -- (7.700,-0.300);
  \node[label, anchor=west] at (7.700,-0.300) {{\scriptsize (2.2)} $\mathcal{Q}$-WL-RF (Ours)};
  \draw[connector] (5.571,0) -- (5.571,-0.700) -- (7.700,-0.700);
  \node[label, anchor=west] at (7.700,-0.700) {{\scriptsize (2.9)} GIN+};
  \draw[connector] (4.80,0) -- (4.80,-1.100) -- (7.700,-1.100);
  \node[label, anchor=west] at (7.700,-1.100) {{\scriptsize (3.8)} Exph.};

  \draw[sigbar] (4.450,-0.150) -- (6.371,-0.150);
  \draw[sigbar] (4.093,-0.270) -- (5.621,-0.270);

\end{tikzpicture}
\label{fig:criticaldifference}
\end{figure}

\begin{table}[t]
\centering
\caption{Test performance across datasets (mean $\pm$ std over 10-fold CV or the 3 seeds). \underline{\textcolor{red}{\textbf{Red}}}: best. \textbf{Bold}: within 1 std of best. \textcolor{gray}{Gray}: more than 1 std below best.}
\label{tab:results}
\setlength{\tabcolsep}{2pt}
{%
\begin{tabular}{l c c cc cc}
\toprule
& \textbf{$\mathcal{Q}$-WL-RF} & \textbf{Kernel} & \multicolumn{2}{c}{\textbf{GNNs}} & \multicolumn{2}{c}{\textbf{Graph Transformers}} \\
\cmidrule(lr){2-2} \cmidrule(lr){3-3} \cmidrule(lr){4-5} \cmidrule(lr){6-7}
\textbf{Dataset} & (Ours) & WL & GIN+ & GCN+ & GraphGPS & Exphormer \\
\midrule
\multicolumn{7}{l}{\textit{TU -- Small Molecules (Accuracy $\uparrow$)}} \\
Mutagen. & \textbf{.840\tiny$\pm$.020} & \textbf{.830\tiny$\pm$.019} & \underline{\textbf{\textcolor{red}{.841\tiny$\pm$.021}}} & \textcolor{gray}{.817\tiny$\pm$.021} & \textbf{.830\tiny$\pm$.014} & \textbf{.828\tiny$\pm$.014} \\
NCI1 & \underline{\textbf{\textcolor{red}{.857\tiny$\pm$.015}}} & \textbf{.855\tiny$\pm$.015} & \textbf{.836\tiny$\pm$.024} & \textcolor{gray}{.791\tiny$\pm$.018} & \textcolor{gray}{.825\tiny$\pm$.015} & \textcolor{gray}{.827\tiny$\pm$.014} \\
NCI109 & \textbf{.848\tiny$\pm$.013} & \underline{\textbf{\textcolor{red}{.857\tiny$\pm$.017}}} & \textcolor{gray}{.838\tiny$\pm$.013} & \textcolor{gray}{.764\tiny$\pm$.020} & \textcolor{gray}{.803\tiny$\pm$.018} & \textcolor{gray}{.817\tiny$\pm$.012} \\
\midrule
\multicolumn{7}{l}{\textit{TU -- Bioinformatics (Accuracy $\uparrow$)}} \\
DD & \underline{\textbf{\textcolor{red}{.796\tiny$\pm$.037}}} & \textbf{.789\tiny$\pm$.034} & \textbf{.772\tiny$\pm$.044} & \textbf{.746\tiny$\pm$.063} & \textbf{.767\tiny$\pm$.036} & \textcolor{gray}{.750\tiny$\pm$.030} \\
PROTEINS & \underline{\textbf{\textcolor{red}{.755\tiny$\pm$.035}}} & \textbf{.725\tiny$\pm$.037} & \textbf{.734\tiny$\pm$.052} & \textcolor{gray}{.700\tiny$\pm$.049} & \textbf{.745\tiny$\pm$.029} & \textbf{.694\tiny$\pm$.095} \\
\midrule
\multicolumn{7}{l}{\textit{TU -- Social Networks (Accuracy $\uparrow$)}} \\
IMDB-B & \textbf{.727\tiny$\pm$.034} & \textbf{.713\tiny$\pm$.035} & \textbf{.735\tiny$\pm$.014} & \underline{\textbf{\textcolor{red}{.743\tiny$\pm$.026}}} & \textbf{.691\tiny$\pm$.057} & \textbf{.738\tiny$\pm$.028} \\
IMDB-M & \textbf{.500\tiny$\pm$.039} & \textbf{.500\tiny$\pm$.037} & \underline{\textbf{\textcolor{red}{.503\tiny$\pm$.029}}} & \textbf{.499\tiny$\pm$.034} & \textcolor{gray}{.481\tiny$\pm$.028} & \textbf{.502\tiny$\pm$.022} \\
\midrule
\multicolumn{7}{l}{\textit{OGB Molecular (ROC-AUC $\uparrow$)}} \\
moltox21 & \textcolor{gray}{.745\tiny$\pm$.004} & \textcolor{gray}{.727\tiny$\pm$.000} & \textcolor{gray}{.758\tiny$\pm$.005} & \underline{\textbf{\textcolor{red}{.776\tiny$\pm$.010}}} & \textcolor{gray}{.758\tiny$\pm$.007} & \textbf{.773\tiny$\pm$.006} \\
molbace & \underline{\textbf{\textcolor{red}{.834\tiny$\pm$.010}}} & \textcolor{gray}{.758\tiny$\pm$.000} & \textcolor{gray}{.773\tiny$\pm$.002} & \textcolor{gray}{.785\tiny$\pm$.018} & \textcolor{gray}{.795\tiny$\pm$.025} & \textcolor{gray}{.773\tiny$\pm$.034} \\
molbbbp & \underline{\textbf{\textcolor{red}{.734\tiny$\pm$.005}}} & \textcolor{gray}{.653\tiny$\pm$.000} & \textcolor{gray}{.686\tiny$\pm$.006} & \textcolor{gray}{.665\tiny$\pm$.007} & \textcolor{gray}{.664\tiny$\pm$.008} & \textcolor{gray}{.694\tiny$\pm$.012} \\
molclintox & \textcolor{gray}{.720\tiny$\pm$.054} & \textcolor{gray}{.537\tiny$\pm$.041} & \textcolor{gray}{.869\tiny$\pm$.018} & \textcolor{gray}{.852\tiny$\pm$.036} & \underline{\textbf{\textcolor{red}{.901\tiny$\pm$.034}}} & \textbf{.889\tiny$\pm$.020} \\
molsider & \underline{\textbf{\textcolor{red}{.658\tiny$\pm$.004}}} & \textcolor{gray}{.540\tiny$\pm$.031} & \textcolor{gray}{.611\tiny$\pm$.002} & \textcolor{gray}{.605\tiny$\pm$.003} & \textcolor{gray}{.599\tiny$\pm$.017} & \textcolor{gray}{.587\tiny$\pm$.008} \\
molhiv & \textbf{.767\tiny$\pm$.013} & \textcolor{gray}{OOM} & \textcolor{gray}{.747\tiny$\pm$.018} & \textcolor{gray}{.763\tiny$\pm$.012} & \underline{\textbf{\textcolor{red}{.775\tiny$\pm$.012}}} & \textbf{.764\tiny$\pm$.017} \\
\midrule
\multicolumn{7}{l}{\textit{Long Range Graph Benchmark (AP $\uparrow$)}} \\
Pep.-func & \underline{\textbf{\textcolor{red}{.710\tiny$\pm$.001}}} & \textcolor{gray}{.671\tiny$\pm$.000} & \textcolor{gray}{.676\tiny$\pm$.004} & \textcolor{gray}{.677\tiny$\pm$.000} & \textcolor{gray}{.652\tiny$\pm$.012} & \textcolor{gray}{.564\tiny$\pm$.008} \\
\bottomrule
\end{tabular}%
}
\end{table}

\textbf{2. Our approach is much faster than neural methods and more memory-efficient than graph kernels.} As seen in Figure \ref{fig:runtime}, GNNs are roughly 5$\times$ and GTs 11--17$\times$ slower to train than $\mathcal{Q}$-WL-RF. When taking hyperparameter optimization into account, the difference blows up to 60--71$\times$ and 82--85$\times$, respectively. For WL-GK, the similarity matrix computation scales poorly as the number of graphs in the dataset grows, which is clearly seen in Figure \ref{fig:memory}.

\begin{figure}[t]
  \centering
  \begin{minipage}[t]{0.48\columnwidth}
  \centering
  \caption{Mean runtime per train/val/test split, averaged across all datasets. $\mathcal{Q}$-WL-RF and WL-GK include time used for HPO.
  Slowdown factors are relative to $\mathcal{Q}$-WL-RF\@.}
  \begin{tikzpicture}
  \begin{axis}[
      xbar,
      width=\columnwidth,
      height=6.4cm,
      xlabel={Mean runtime per split (s, log scale)},
      xlabel style={font=\scriptsize},
      xmode=log,
      log origin=infty,
      xmin=10, xmax=8000,
      ytick={1,2,4,6,8,10},
      yticklabels={%
          $\mathcal{Q}$-WL-RF,
          WL-GK,
          GIN+ \scriptsize{},
          GCN+ \scriptsize{},
          GraphGPS \scriptsize{},
          Exphormer \scriptsize{},
      },
      yticklabel style={font=\small},
      ytick style={draw=none},
      y dir=reverse,
      bar width=7pt,
      enlarge y limits={abs=0.6cm},
      tick label style={font=\small},
      axis x line*=bottom,
      axis y line*=left,
      xmajorgrids=true,
      grid style={gray!25, thin},
      xtick={10, 100, 1000},
      xticklabels={10, 100, 1000},
      clip=false,
      extra x ticks={27.3},
      extra x tick labels={},
      extra x tick style={grid style={blue!70!black, dashed, thin}},
  ]

  \addplot[
      fill=blue!55!cyan!70, draw=blue!70!black, fill opacity=0.85
  ] coordinates {
      (27.3, 0)
  };

  \addplot[
      fill=red!15, draw=red!50!black, fill opacity=0.85
  ] coordinates {
      (143.6, 3.5)
      (127.1, 5.5)
      (295.1, 7.5)
      (460.1, 9.5)
  };

  \addplot[
      fill=red!55, draw=red!50!black, fill opacity=0.95
  ] coordinates {
      (33.2, 3)
      (1638, 5.25)
      (1941, 7.25)
      (1933, 9.25)
      (2316, 11.25)
  };

  \node[font=\scriptsize, text=red!60!black, anchor=west]
      at (axis cs:33.2, 2) {\;\;$1.2\times$};
  \node[font=\scriptsize, text=red!60!black, anchor=west]
      at (axis cs:143.6, 3.5) {\;\;$5.3\times$};
  \node[font=\scriptsize\bfseries, text=red!60!black, anchor=west]
      at (axis cs:1638, 4.5) {\;\;$60\times$};
  \node[font=\scriptsize, text=red!60!black, anchor=west]
      at (axis cs:127.1, 5.5) {\;\;$4.7\times$};
  \node[font=\scriptsize\bfseries, text=red!60!black, anchor=west]
      at (axis cs:1941, 6.5) {\;\;$71\times$};
  \node[font=\scriptsize, text=red!60!black, anchor=west]
      at (axis cs:295, 7.5) {\;\;$11\times$};
  \node[font=\scriptsize\bfseries, text=red!60!black, anchor=west]
      at (axis cs:1933, 8.5) {\;\;$82\times$};
  \node[font=\scriptsize, text=red!60!black, anchor=west]
      at (axis cs:460.1, 9.5) {\;\;$17\times$};
  \node[font=\scriptsize\bfseries, text=red!60!black, anchor=west]
      at (axis cs:2316, 10.5) {\;\;$85\times$};

  \node[font=\scriptsize, fill=blue!55!cyan!70, fill opacity=0.85,
        draw=blue!70!black, minimum width=8pt, minimum height=6pt,
        inner sep=0pt] at (axis cs:300, 1) {};
  \node[font=\scriptsize, anchor=west] at (axis cs:360, 1) {Ours};
  \node[font=\scriptsize, fill=red!15, fill opacity=0.85,
        draw=red!50!black, minimum width=8pt, minimum height=6pt,
        inner sep=0pt] at (axis cs:1500, 1) {};
  \node[font=\scriptsize, anchor=west] at (axis cs:1800, 1) {No HPO};
  \node[font=\scriptsize, fill=red!55, fill opacity=0.95,
        draw=red!50!black, minimum width=8pt, minimum height=6pt,
        inner sep=0pt] at (axis cs:300, 2) {};
  \node[font=\scriptsize, anchor=west] at (axis cs:360, 2) {HPO};

  \end{axis}
  \end{tikzpicture}
  \label{fig:runtime}
  \end{minipage}
\hfill
\begin{minipage}[t]{0.48\columnwidth}
\centering
\caption{Memory consumption of $\mathcal{Q}$-WL-RF and WL-GK as a function of dataset size. Unlike the graph kernel, our method remains efficient even on larger datasets.}
\begin{tikzpicture}
\begin{axis}[
    width=\textwidth,
    height=4.8cm,
    xlabel={Number of graphs},
    y coord trafo/.code={\pgfmathparse{#1/1000}\pgfmathresult},
    ylabel={Memory (GB)},
    xlabel style={font=\scriptsize},
    ylabel style={font=\scriptsize, at={(axis description cs:-0.1,0.5)}},
    tick label style={font=\tiny},
    legend style={
        font=\tiny,
        at={(0.02,0.98)},
        anchor=north west,
    },
    xmode=log,
    grid=major,
    grid style={line width=0.2pt, draw=gray!30},
    legend cell align=left,
]
\addplot[
    color=blue,
    mark=square*,
    only marks,
    mark size=1.2pt,
] coordinates {
    (1000, 909.9)  
  (1113, 938.6)  
  (1178, 1691.4) 
  (1427, 903.8)  
  (1477, 893.4)  
  (1500, 908.0)  
  (1513, 974.1)  
  (2039, 924.7)  
  (4110, 1058.1) 
  (4127, 1062.8) 
  (4337, 1014.2) 
  (7831, 964.4)  
  (15535, 1307.7) 
  (41127, 1832.9) 
};
\addlegendentry{$\mathcal{Q}$-WL-RF}
\addplot[
    color=orange,
    mark=*,
    only marks,
    mark size=1.2pt,
] coordinates {
    (1000, 1112.0) (1113, 1123.3) (1178, 1225.1) (1427, 986.2)
    (1477, 974.4) (1500, 1103.0) (1513, 975.8) (2039, 1037.5)
    (4110, 2511.8) (4127, 2406.5) (4337, 1871.7) (7831, 3360.8)
};
\addlegendentry{WL-GK}
\draw[->, densely dotted, orange, thick] (axis cs:15535,3100) -- (axis cs:15535,3600);
\node[anchor=north, font=\tiny\bfseries, orange] at (axis cs:15535,3100) {33 GB};
\draw[->, densely dotted, orange, thick] (axis cs:41127,3100) -- (axis cs:41127,3600);
\node[anchor=north, font=\tiny\bfseries, orange] at (axis cs:41127,3100) {$>$128 GB};
\end{axis}
\end{tikzpicture}
\label{fig:memory}
\end{minipage}
\end{figure}

\textbf{3. Full WL is good on its own, but cruder modalities help.} Table \ref{tab:qwlu-ablation} shows an ablation study of the modality sets of $\mathcal{Q}$-WL-RF. We observe that Full is the most crucial modality, but Plain and Majority are still useful on average, with them being most impactful on the \texttt{molclintox} dataset; interestingly, that dataset is the one where our method performs the worst.

\begin{table}[t]
  \centering
  \caption{Drop-one ablation for $\mathcal{Q}$-WL-RF, summarized across all 14 datasets.}
  \label{tab:qwlu-ablation}
  \small
  \begin{tabular}{lccc}
  \toprule
  \textbf{Variant removed} & \textbf{Mean $\Delta$} & \textbf{Weakens performance} & \textbf{Worst case} \\
  \midrule
  Full        & $-0.037$ & 14/14 & $-0.404$ (\texttt{Peptides-func}) \\
  Plain       & $-0.004$ & 9/14  & $-0.046$ (\texttt{molclintox}) \\
  Majority    & $-0.002$ & 10/14 & $-0.026$ (\texttt{molclintox}) \\
  \bottomrule
  \end{tabular}
\end{table}

\textbf{4. The optimal number of rounds of ($\mathcal{Q}$-)WL is usually low.} The number usually ranges from 0--3, with the mean value being 1.39 for Full WL, 1.50 for Plain WL, 1.59 for Majority WL and 2.56 for WL-GK; more details in Appendix \ref{appendix-hyperparameters}. This could imply that the datasets in question have little useful geometric information beyond 1- or 2-hop neighbourhoods.

\section{Conclusion}
\label{sec:conclusion}

We introduced a novel approach to graph classification by transforming graph data into tabular form via variants of the WL-algorithm. We established a general theoretical result characterizing the expressive power of our classifiers and evaluated our approach on 14 benchmark datasets, demonstrating competitive results to graph kernels, GNNs and GTs while being computationally much more efficient.

Beyond its use as a standalone classifier, we believe our method has practical
value as a rapid diagnostic tool for graph datasets: because a single run
takes seconds rather than hours, practitioners can quickly test how
incorporating different data affects classification, and use the optimal depth~$d$ as a
probe for whether graph structure contributes useful signal at all; if
$d = 0$ is optimal, the topology adds nothing beyond the node labels
themselves.

\paragraph{Limitations.} \textbf{1. Datasets.} Due to hardware and time limitations, we were unable to perform tests on very large graph classification datasets, such as \texttt{ogbg-molpcba} and \texttt{ogbg-ppa}. Our method does perform competitively on \texttt{ogbg-molhiv} and \texttt{Peptides-func}, the largest datasets in our collection, but further work would be needed to confirm if this continues to scale to datasets with hundreds of thousands or millions of graphs.

\textbf{2. Tasks.} Experiments concern only graph classification, while ignoring graph regression, link prediction and all node-level tasks. $\mathcal{Q}$-WL-preprocessing could easily be a part of all these, and node classification could plausibly be implemented by treating each node as a separate graph in the tabularization phase, but how to extend our method to the other domains remains unclear.

\textbf{3. Discrete node labels.} Neither edge information nor continuous node attributes were used by $\mathcal{Q}$-WL-RF. Preliminary experiments using, e.g., $k$-means clustering to discretize and concatenate this data to the node labels resulted in inconclusive changes in accuracy with a remarkably increased runtime (and increased complexity of the pipeline). Interestingly, our method obtains comparable accuracies to GNNs and GTs despite the fact that they do utilize edge information and continuous node attributes.

Multiple directions remain open for future work. First, the $\mathcal{Q}$-WL variants we test correspond to relatively simple weakenings of graded modal logic; more expressive quantifiers (e.g., those detecting local structures like triangles) remain unexplored. Second, further research on the tabularization step (e.g., systematically comparing different tabular classifiers and analyzing feature importances) is left for future work.

\newpage

\bibliographystyle{plainurl}
\bibliography{arxiv_new}

\clearpage
\appendix

\section{Supplementary Experimental Details}\label{appendix-hyperparameters}

\textbf{Implementation.}
  All code is written in Python 3.12.3.
  $\mathcal{Q}$-WL-RF uses the random forest implementation from scikit-learn
  1.7.1~\cite{scikit-learn}, and WL-GK uses the WL subtree kernel implementation
  from GraKel 0.1.8~\cite{JMLR:v21:18-370} with scikit-learn's SVM classifier.
  GCN, GIN, and Exphormer are implemented using PyTorch Geometric
  2.6.1~\cite{Fey/Lenssen/2019} on top of PyTorch 2.9.1, and GraphGPS uses the
  original codebase of~\cite{rampavsek2022recipe}.
  All experiments were run on a Windows 11 desktop with an Intel Core Ultra 9 285
  24-core processor, 128\,GB of RAM, and an NVIDIA RTX PRO 6000 Blackwell Max-Q
  Workstation Edition GPU with 96\,GB of VRAM.
  $\mathcal{Q}$-WL-RF and WL-GK were run on the CPU; GNNs (GCN, GIN), Exphormer,
  and GraphGPS were run on the GPU.

  \paragraph{Hyperparameters.} For all neural baselines (GIN+, GCN+, Exphormer, GraphGPS) we use Optuna
  4.8.0~\cite{optuna_2019} with the TPE sampler under a fixed wall-clock budget
  of 30~minutes per (method, dataset) cell.
  The search spaces are listed in
  Tables~\ref{tab:gnn-hpo}, \ref{tab:exphormer-hpo}, and~\ref{tab:gps-hpo}.

  The number of training epochs is \emph{not} part of the Optuna search.
  Instead, every training run both inside an HPO trial and at the final
  retraining stage trains for up to $300$ epochs while tracking
  validation performance after each epoch, and the parameters from the
  best-validation epoch are used for evaluation. Training also early-stops if the validation accuracy reaches $100\%$. For the TUDataset 10-fold CV, to avoid prohibitive time cost, we use $50$ epochs inside the HPO trials and $300$ epochs for the final evaluation. 
  
  GIN and GCN use AdamW with a linear-warmup-then-cosine-anneal learning rate
  schedule and gradient clipping at norm $1.0$; Exphormer uses the same
  optimiser and schedule.
  GraphGPS uses its original codebase~\cite{rampavsek2022recipe} configuration
  system, with HPO supplying CLI overrides for the searched parameters.

  \begin{table}[p]
  \centering
  \caption{HPO search space for GIN+ and GCN+.}
  \label{tab:gnn-hpo}
  \small
  \begin{tabular}{ll}
  \toprule
  \textbf{Hyperparameter} & \textbf{Range / values} \\
  \midrule
  Hidden dimension       & $\{32, 64, 128, 256\}$ \\
  Number of layers       & integer $\in [2, 12]$ \\
  Dropout                & $[0.0, 0.3]$, step $0.05$ \\
  Learning rate          & log-uniform $[10^{-4}, 10^{-2}]$ \\
  Weight decay           & log-uniform $[10^{-6}, 10^{-3}]$ \\
  Batch size             & $\{32, 64, 128\}$ \\
  RWSE positional dim.   & $\{0, 16, 20\}$ \\
  \bottomrule
  \end{tabular}
  \end{table}

  \begin{table}[p]
  \centering
  \caption{HPO search space for Exphormer.}
  \label{tab:exphormer-hpo}
  \small
  \begin{tabular}{ll}
  \toprule
  \textbf{Hyperparameter} & \textbf{Range / values} \\
  \midrule
  Hidden dimension       & $\{32, 64, 128\}$ (must be divisible by heads) \\
  Number of layers       & integer $\in [2, 6]$ \\
  Number of heads        & $\{2, 4, 8\}$ \\
  Dropout                & $[0.0, 0.3]$, step $0.1$ \\
  Attention dropout      & $[0.0, 0.5]$, step $0.1$ \\
  Learning rate          & log-uniform $[10^{-4}, 10^{-2}]$ \\
  Weight decay           & log-uniform $[10^{-6}, 10^{-3}]$ \\
  Batch size             & $\{32, 64, 128\}$ \\
  Warmup epochs          & $\{5, 10, 20\}$ \\
  Expander degree        & $\{2, 3, 4\}$ \\
  RWSE positional dim.   & fixed at $16$ \\
  \bottomrule
  \end{tabular}
  \end{table}

  \begin{table}[p]
  \centering
  \caption{HPO search space for GraphGPS.}
  \label{tab:gps-hpo}
  \small
  \begin{tabular}{ll}
  \toprule
  \textbf{Hyperparameter} & \textbf{Range / values} \\
  \midrule
  Hidden dimension       & $\{32, 64, 128, 256\}$ (must be divisible by heads) \\
  Number of layers       & integer $\in [2, 8]$ \\
  Number of heads        & $\{2, 4, 8\}$ \\
  Dropout                & $[0.0, 0.3]$, step $0.1$ \\
  Attention dropout      & $[0.0, 0.5]$, step $0.1$ \\
  Learning rate          & log-uniform $[10^{-4}, 10^{-2}]$ \\
  Weight decay           & log-uniform $[10^{-6}, 10^{-3}]$ \\
  Batch size             & $\{16, 32, 64, 128\}$ \\
  \bottomrule
  \end{tabular}
  \end{table}

$\mathcal{Q}$-WL-RF uses the random forest classifier with the default hyperparameters of the scikit-learn 1.7.1 implementation, listed in Table \ref{tab:rf-hyperparams}, and WL-GK uses the SVM classifier with the default hyperparameters of the scikit-learn 1.7.1 implementation, listed in Table \ref{tab:svm-hyperparams}. The number of optimal rounds of ($\mathcal{Q}$)-WL per dataset used for WL-GK and for $\mathcal{Q}$-WL-RF is listed in Table \ref{tab:depths}.

\begin{table}[p]
\centering
\caption{Default random forest hyperparameter values (scikit-learn 1.7.1).}
\label{tab:rf-hyperparams}
\begin{tabular}{ll}
\toprule
\textbf{Hyperparameter} & \textbf{Value} \\
\midrule
\texttt{max\_depth} & None \\
\texttt{n\_estimators} & 100 \\
\texttt{criterion} & gini \\
\texttt{max\_features} & sqrt \\
\texttt{min\_samples\_split} & 2 \\
\texttt{min\_samples\_leaf} & 1 \\
\texttt{bootstrap} & True \\
\texttt{min\_impurity\_decrease} & 0.0 \\
\bottomrule
\end{tabular}
\end{table}

\begin{table}[p]
\centering
\caption{Default SVM hyperparameter values (scikit-learn 1.7.1).}
\label{tab:svm-hyperparams}
\begin{tabular}{ll}
\toprule
\textbf{Hyperparameter} & \textbf{Value} \\
\midrule
\texttt{C} & 1.0 \\
\texttt{shrinking} & True \\
\texttt{probability} & False \\
\texttt{tol} & 0.001 \\
\texttt{cache\_size} & 200 \\
\texttt{class\_weight} & None \\
\texttt{max\_iter} & $-1$ (no limit) \\
\texttt{decision\_function\_shape} & ovr \\
\texttt{break\_ties} & False \\
\bottomrule
\end{tabular}
\end{table}

\paragraph{Datasets.} Statistics of the datasets are listed in Table \ref{tab:datasets}.

\begin{table}[p]
\centering
\caption{Comparing the optimal number of rounds of ($\mathcal{Q}$)-WL across selected methods and datasets.}
\label{tab:depths}
\small
\setlength{\tabcolsep}{4pt}
\begin{tabular}{lcccc}
\toprule
\textbf{Dataset} & Full & Plain & Majority & WL-GK \\
\midrule
Mutagenicity & 1.43 {\scriptsize ±0.46} & 1.96 {\scriptsize ±0.34} & 1.56 {\scriptsize ±0.52} & 2.78 {\scriptsize ±1.21} \\
NCI1 & 2.12 {\scriptsize ±0.23} & 3.14 {\scriptsize ±0.52} & 2.72 {\scriptsize ±0.46} & 6.28 {\scriptsize ±1.44} \\
NCI109 & 2.23 {\scriptsize ±0.40} & 3.44 {\scriptsize ±0.56} & 2.80 {\scriptsize ±0.40} & 5.52 {\scriptsize ±0.90} \\
DD & 0.00 {\scriptsize ±0.00} & 0.00 {\scriptsize ±0.00} & 0.00 {\scriptsize ±0.00} & 0.92 {\scriptsize ±0.78} \\
PROTEINS & 0.88 {\scriptsize ±0.27} & 0.96 {\scriptsize ±0.67} & 0.72 {\scriptsize ±0.46} & 0.56 {\scriptsize ±0.92} \\
IMDB-B & 0.00 {\scriptsize ±0.00} & 0.00 {\scriptsize ±0.00} & 0.00 {\scriptsize ±0.00} & 0.96 {\scriptsize ±1.11} \\
IMDB-M & 0.03 {\scriptsize ±0.10} & 0.16 {\scriptsize ±0.31} & 0.06 {\scriptsize ±0.14} & 0.62 {\scriptsize ±0.87} \\
moltox21 & 1.00 {\scriptsize ±0.00} & 2.40 {\scriptsize ±0.49} & 2.20 {\scriptsize ±0.40} & 2.00 {\scriptsize ±0.00} \\
molbace & 1.00 {\scriptsize ±0.00} & 3.40 {\scriptsize ±0.49} & 3.60 {\scriptsize ±1.74} & 1.00 {\scriptsize ±0.00} \\
molbbbp & 2.00 {\scriptsize ±0.00} & 1.40 {\scriptsize ±0.49} & 1.20 {\scriptsize ±0.40} & 1.00 {\scriptsize ±0.00} \\
molclintox & 1.00 {\scriptsize ±0.00} & 1.20 {\scriptsize ±0.40} & 1.00 {\scriptsize ±0.00} & 0.60 {\scriptsize ±0.49} \\
molsider & 2.00 {\scriptsize ±0.00} & 0.40 {\scriptsize ±0.80} & 2.00 {\scriptsize ±0.00} & 1.00 {\scriptsize ±2.00} \\
molhiv & 1.60 {\scriptsize ±0.49} & 1.20 {\scriptsize ±0.40} & 2.60 {\scriptsize ±0.49} & \textcolor{gray}{OOM} \\
Peptides-func & 4.20 {\scriptsize ±0.40} & 1.40 {\scriptsize ±0.49} & 1.40 {\scriptsize ±0.49} & 10.00 {\scriptsize ±0.00} \\
\midrule
\textbf{Mean} & \textbf{1.39} & \textbf{1.50} & \textbf{1.56} & \textbf{2.56} \\
\bottomrule
\end{tabular}
\end{table}

\begin{table}[p]
\centering
\caption{Dataset statistics.}
\label{tab:datasets}
\footnotesize
\setlength{\tabcolsep}{3pt}
\begin{tabular}{lrrrrr}
\toprule
\textbf{Dataset} & \textbf{Graphs} & \textbf{Avg. $|V|$} & \textbf{Avg. deg.} & \textbf{Classes} & \textbf{Tasks} \\
\midrule
Mutagenicity & 4,337  & 30.3  & 2.0  & 2  & 1\\
NCI1         & 4,110  & 29.9  & 2.2  & 2  & 1\\
NCI109       & 4,127  & 29.7  & 2.2  & 2  & 1\\
DD           & 1,178  & 284.3 & 5.0  & 2  & 1\\
PROTEINS     & 1,113  & 39.1  & 3.7  & 2  & 1\\
IMDB-B       & 1,000  & 19.8  & 9.8  & 2  & 1\\
IMDB-M       & 1,500  & 13.0  & 10.1 & 3  & 1\\
moltox21     & 7,831  & 18.6  & 2.1  & 2  & 12\\
molbace      & 1,513  & 34.1  & 2.2  & 2  & 1 \\
molbbbp      & 2,039  & 24.1  & 2.2  & 2  & 1 \\
molclintox   & 1,477  & 26.2  & 2.1  & 2  & 2 \\
molsider     & 1,427  & 33.6  & 2.1  & 2  & 27\\
molhiv       & 41,127 & 25.5  & 2.2  & 2  & 1 \\
Peptides-func & 15,535 & 150.9 & 2.0 & 2  & 10\\
\bottomrule
\end{tabular}
\end{table}

\section{Proof of Theorem \ref{thm:generalized-quantifiers-characterization}}\label{generalized-quantifiers-proof}

In this section, we provide the full proof for Theorem \ref{thm:generalized-quantifiers-characterization}. We will start with auxiliary definitions.

The definition of the $\mathcal{Q}$-bisimulation game can be found in Section \ref{sec:arbitrary-quantifiers}. In the $d$\textbf{-round} $\mathcal{Q}$\textbf{-bisimulation game}, Player 2 is declared the winner if Player 1 has not won after $d$ rounds. If Player 2 has a winning strategy for the $\mathcal{Q}$-bisimulation game (resp. $d$-round $\mathcal{Q}$-bisimulation game) on the pointed Kripke models $(\mathfrak{M},w)$ and $(\mathfrak{N},v)$, we write $\mathfrak{M},w\sim_{\mathcal{Q}}\mathfrak{N},v$ (resp. $\mathfrak{M},w\sim^d_{\mathcal{Q}}\mathfrak{N},v$).

We now define equivalence in $\mathrm{PL}(\mathcal{Q})$. Let $\mathcal{Q}$ be a set of generalized quantifiers. Recall that PL($\mathcal{Q}$) denotes propositional logic extended with the generalized modalities obtained from the quantifiers $Q\in\mathcal{Q}$ and PL(\(\mathcal{Q}\))\(^d\) denotes the set of formulas of PL(\(\mathcal{Q}\)) restricted to a maximum modal depth of $d$. If two pointed Kripke models $(\mathfrak{M},w)$ and $(\mathfrak{N},v)$ are indistinguishable by PL(\(\mathcal{Q}\))\(^d\), we write $\mathfrak{M},w\equiv_{\text{PL}(\mathcal{Q})}^d \mathfrak{N},v$.

Finally, we define \textbf{$\mathcal{Q}$-modal $d$-types}. Fix a Kripke model \(\mathfrak{M}\). For each \(w \in \mathfrak{M}\) we define \(\varphi_{\mathfrak{M},w}^{\mathcal{Q},d}\) recursively as follows. First, we define
\begin{align*}
    \varphi_{\mathfrak{M},w}^{\mathcal{Q},0} := \bigwedge \{p \mid \mathfrak{M},w \Vdash p\} \ \land \ \bigwedge\{\neg p \mid \mathfrak{M},w \Vdash \neg p\}.
\end{align*}
Then, having defined $\varphi^{\mathcal{Q},d}_{\mathfrak{M},w}$, we define
\begin{align*}
    & \varphi_{\mathfrak{M},w}^{\mathcal{Q},d+1} := \varphi_{\mathfrak{M},w}^{\mathcal{Q},0} \\
    & \land \\
    & \bigwedge_{\begin{array}{c}
       Q \in \mathcal{Q},\, C \subseteq \{\varphi_{\mathfrak{M},w}^{\mathcal{Q},d} \mid w \in \mathfrak{M}\} \\
        \mathfrak{M},w \Vdash \langle Q \rangle \bigvee\limits_{\varphi \in C} \varphi
    \end{array}} \langle Q \rangle \bigvee_{\varphi \in C} \varphi \\
    & \land \\
    & \bigwedge_{\begin{array}{c}
        Q \in \mathcal{Q},\, C \subseteq \{\varphi_{\mathfrak{M},w}^{\mathcal{Q},d} \mid w \in \mathfrak{M}\} \\
        \mathfrak{M},w \Vdash \neg \langle Q \rangle \bigvee\limits_{\varphi \in C} \varphi
    \end{array}} \neg \langle Q \rangle \bigvee_{\varphi \in C} \varphi.
\end{align*}

It is not difficult to see that, when given \(\mathfrak{M}\) as input, at round \(d\) the \(\mathcal{Q}\)-WL essentially computes \(\varphi_{\mathfrak{M},w}^{\mathcal{Q},d}\) for each node \(w \in \mathfrak{M}\). Note that if \(\mathcal{Q}\) is finite, \(\varphi_{\mathfrak{M},w}^{\mathcal{Q},d}\) is a formula of PL(\(\mathcal{Q}\))\(^d\). In the case where \(\mathcal{Q}\) is infinite this is no longer the case, because then \(\varphi_{\mathfrak{M},w}^{\mathcal{Q},d}\) contains infinitary conjunctions. (See Appendix \ref{appendix:effectively-finite} for how to make \(\mathcal{Q}\)-WL finitary when \(\mathcal{Q}\) is infinite but has certain computational properties.)

The rest of this section is devoted to proving the following reformulation of Theorem \ref{thm:generalized-quantifiers-characterization}: for every pair $(\mathfrak{M},w)$, $(\mathfrak{N},v)$ of pointed Kripke models and set $\mathcal{Q}$ of generalized modalities the following are equivalent:
\begin{enumerate}
    \item $\mathfrak{M}, w \sim^d_{\mathcal{Q}} \mathfrak{N},v$,
    \item $\mathfrak{M}, w \equiv^d_{\mathrm{PL}(\mathcal{Q})}\mathfrak{N},v$,
    \item $\mathfrak{N},v\Vdash \varphi^{\mathcal{Q},d}_{\mathfrak{M} \sqcup \mathfrak{N}, w}$.
\end{enumerate}
Here \(\mathfrak{M} \sqcup \mathfrak{N}\) denotes the disjoint union of \(\mathfrak{M}\) and \(\mathfrak{N}\).

We will prove the equivalence of \(1.,2.\) and \(3.\) via induction on \(d\). The base case \(d = 0\) is clear, because \(1.,2.\) and \(3.\) are just different ways of saying that \(\mathfrak{M},w\) and \(\mathfrak{N},v\) satisfy the same proposition symbols. For the induction step, assume that \(1.,2.\) and \(3.\) are equivalent for \(d\). We will prove their equivalence for \(d + 1\) via three Lemmas.

\begin{lemma}\label{bisimulation-iff-equiv}
    $\mathfrak{M},w\sim^{d+1}_\mathcal{Q} \mathfrak{N},v \implies \mathfrak{M},w\equiv_{\mathrm{PL}(\mathcal{Q})}^{d+1} \mathfrak{N},v$.
\end{lemma}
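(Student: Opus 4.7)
My plan is to prove the lemma by induction on $d$, combined with a structural induction on the formula $\varphi \in \mathrm{PL}(\mathcal{Q})^d$, working contrapositively throughout: whenever $\varphi$ distinguishes $(\mathfrak{M},w)$ from $(\mathfrak{N},v)$, I will exhibit a winning strategy for Player 1 in the $d$-round $\mathcal{Q}$-bisimulation game. The base case $d = 0$ is immediate, since if Player 2 wins the 0-round game then the initial pebbles must escape Player 1's winning condition, forcing $w$ and $v$ to carry identical propositional labels; every formula in $\mathrm{PL}(\mathcal{Q})^0$ is a Boolean combination of proposition symbols, so the equivalence follows at once.

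For the inductive step, suppose $\mathfrak{M}, w \sim^{d+1}_\mathcal{Q} \mathfrak{N}, v$ and the lemma holds at depth $d$. A structural induction on $\varphi \in \mathrm{PL}(\mathcal{Q})^{d+1}$ reduces the only nontrivial case to $\varphi = \langle Q \rangle \psi$ with $\psi \in \mathrm{PL}(\mathcal{Q})^d$. Arguing contrapositively, assume without loss of generality that $\mathfrak{M}, w \Vdash \langle Q \rangle \psi$ and $\mathfrak{N}, v \not\Vdash \langle Q \rangle \psi$, and set
\[
X := \{u \in N(w) : \mathfrak{M}, u \Vdash \psi\}, \quad X' := \{u' \in N(v) : \mathfrak{N}, u' \Vdash \psi\},
\]
so $(N(w), X) \in Q$ while $(N(v), X') \notin Q$. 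Player 1's strategy opens by selecting the pebble on $w$, the modality $Q$, the set $X$, and $P := X'$.

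The key observation driving the rest of the argument is that $X$ and $P$ are the \emph{exact} $\psi$-sets on their respective sides: every node inside either set satisfies $\psi$, while every node in the corresponding complement satisfies $\neg \psi$. From here I handle the two main families of Player 2 responses. First, if Player 2 contests either set choice (via step 3 or step 6), the contest rules deposit the pebbles on an inside node and an outside node of the contested set, so the new pebbles are separated by $\psi \in \mathrm{PL}(\mathcal{Q})^d$, and the inductive hypothesis (contrapositive) hands Player 1 a winning $d$-round continuation. Second, if Player 2 instead picks $X'' \supseteq P$ with $(N(v), X'') \in Q$, then $X'' \neq X'$ (since $(N(v), X') \notin Q$), so $X'' \setminus X'$ is nonempty and consists entirely of $\neg\psi$-nodes. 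Player 1 then invokes option 2 of step 5 with a pebble on some $t \in X'' \setminus X'$, and whichever $u \in X \cup P$ Player 2 chooses as the other pebble, $\psi(u) \wedge \neg\psi(t)$ holds, so the IH closes the game.

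The hard part will be the careful bookkeeping demanded by the game's multiple contest rules and role-swap clauses. In particular, I must verify that option 2 of step 5 is always available, which relies on $(N(v), \varnothing) \notin Q$ whenever $X' = \varnothing$ (thus ruling out $X'' = \varnothing$), and must show that in degenerate sub-cases where the natural Player 1 move is blocked by an empty complement, a symmetric appeal to option 1 together with the role swap still forces a $\psi$-separated pair from which the new attacker wins by the IH. A secondary subtlety is that Player 1 chooses $P$ with no $Q$-constraint, which is precisely what makes this strategy uniform across all generalized quantifiers in $\mathcal{Q}$.
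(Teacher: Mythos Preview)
Your proposal has a genuine gap in the handling of the step-3 contest of $P$. You assert that any contest ``deposits the pebbles on an inside node and an outside node of the contested set'', but step~3 of the game is not like step~6: when the defender contests $P$, one pebble goes to a node in $P\subseteq N(v)$ and the other to an \emph{arbitrary} node in $N(w)$, not to a node in the complement of anything. With your choice $P=\{u'\in N(v):\mathfrak{N},u'\Vdash\psi\}$, Player~2 can contest by pairing some $v'\in P$ (which satisfies $\psi$) with some $w'\in X\subseteq N(w)$ (which also satisfies $\psi$); this pair is not separated by $\psi$, and if $v'$ and $w'$ happen to be $\equiv^d_{\mathrm{PL}(\mathcal{Q})}$-equivalent the induction hypothesis gives you nothing. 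Concretely: take $Q=\exists^{\geq 2}$, $\psi=p$, $d+1=1$; let $w$ have two $p$-neighbours and $v$ one $p$-neighbour, all three with identical propositional labels. Your strategy sets $X=N(w)$ and $P=\{c\}$; Player~2 contests $P$ by matching $c$ against one of $w$'s neighbours, reaching a label-identical pair with zero rounds remaining, and Player~1 loses.

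The paper repairs exactly this point by taking $P$ to be only those $\psi$-nodes of $N(v)$ that are \emph{not} $\equiv^d_{\mathrm{PL}(\mathcal{Q})}$-equivalent to any node of $N(w)$. Then any step-3 contest lands on a pair that is by construction $\not\equiv^d$, and the induction hypothesis applies. The cost of this smaller $P$ is that one can no longer force Player~2's set $X''\supseteq P$ to overshoot the $\psi$-set; it may instead omit a $\psi$-node $v'\in N(v)\setminus P$, which is then $\equiv^d$ to some $w'\in X$, and the paper uses option~1 of step~5 with the role swap to close that case. So option~1 is essential to the corrected argument, not merely a degenerate edge case as your outline suggests.
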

\begin{proof}
    
    Let $(\mathfrak{M},w)$ and $(\mathfrak{N},v)$ be pointed Kripke models and, contrapositively, assume that $\mathfrak{M},w\not\equiv_{\text{PL}(\mathcal{Q})}^{d+1}\mathfrak{N},v$. Without loss of generality, we may assume that $\mathfrak{M},w\Vdash\GQ \psi$ and $\mathfrak{N},v\Vdash\neg\GQ\psi$ for some $Q\in \mathcal{Q}$ and $\psi\in$ PL(\(\mathcal{Q}\))\(^d\).
    
    Player 1 now begins the game by choosing
    \[X := \{w' \in N(w) \mid \mathfrak{M},w' \Vdash \psi\}.\]
    If Player 2 contests this choice and wins, this would mean that there exist $w'\in X$ and $w''\in N(w)\setminus X$ such that $\mathfrak{M},w'\sim^d_{\mathcal{Q}}\mathfrak{M},w''$. By the induction hypothesis, this would mean that $\mathfrak{M},w'\equiv_{\text{PL}(\mathcal{Q})}^d\mathfrak{M},w''$ and thus $\mathfrak{M},w''\Vdash \psi$, which contradicts the definition of $X$. We may thus assume that Player 2 does not contest the choice of $X$.

    Player 1 then chooses
    \begin{align*}
      &P := \{v' \in N(v) \mid\\ &\mathfrak{N},v'\Vdash \psi \text{ and } \mathfrak{N},v'\not\equiv^d_{\text{PL}(\mathcal{Q})}\mathfrak{M},w' \text{ for all } w'\in N(w)\}.  
    \end{align*}
    Again, if Player 2 contests this choice, then since $\mathfrak{N},v'\not\equiv^d_{\text{PL}(\mathcal{Q})}\mathfrak{M},w'$ for all $v'\in P$ and $w' \in N(w)$, by the induction hypothesis, Player 1 has a winning strategy.

    Player 2 must thus choose a witness set $X'$ of their own. Since $\mathfrak{N},v\Vdash \neg \GQ \psi$, we know that
    \[
        (N(v), \{v' \in N(v) \mid \mathfrak{N},v' \Vdash \psi\}) \not\in Q.
    \]
    Hence $X' \neq \{v' \in N(v) \mid \mathfrak{N},v' \Vdash \psi\}$. This means that either there exists a $v' \in X'$ such that $\mathfrak{N},v' \Vdash \neg \psi$ or there exists a $v' \in N(v)\setminus X'$ such that $\mathfrak{N},v' \Vdash \psi$.
    \begin{enumerate}
        \item Consider the first case. Since $\mathfrak{N},v'\Vdash \neg \psi$, we have $v'\in X'\setminus P$. Player 1 now simply chooses $v'$, and Player 2 is forced to respond by choosing $w'\in X\cup P$ such that $\mathfrak{M},w'\Vdash \psi$. Since $\mathfrak{M},w'\not\equiv^d_{\mathrm{PL}(\mathcal{Q})} \mathfrak{N},v'$, by the induction hypothesis, we conclude that the attacker, i.e. Player 1, has a winning strategy from this position.
        \item Consider then the second case. Since $P\subseteq X'$, we have $v'\notin P$ and thus $\mathfrak{N},v'\equiv^d_{\text{PL}(\mathcal{Q})}\mathfrak{M},w'$ for some $w'\in N(w)$. Hence $\mathfrak{M},w'\Vdash \psi$, so $w'\in X$. Player 1 now chooses $v'$ and $w'$ and becomes the defender. By the induction hypothesis, we have $\mathfrak{N},v'\sim^d_{\mathcal{Q}}\mathfrak{M},w'$, which means the defender, i.e. Player 1, has a winning strategy from this position.
    \end{enumerate}
    Player 1 therefore has a winning strategy for the $d+1$-round $\mathcal{Q}$-bisimulation game played on $(\mathfrak{M},w)$ and $(\mathfrak{N},v)$, which means $\mathfrak{M},w\not\sim^{d+1}_{\mathcal{Q}}\mathfrak{N},v$.
\end{proof}

\begin{lemma}\label{equiv-implies-wl}
    $\mathfrak{M},w\equiv^{d+1}_{\mathrm{PL}(\mathcal{Q})}\mathfrak{N},v\implies \mathfrak{N},v\Vdash \varphi^{\mathcal{Q},d+1}_{\mathfrak{M} \sqcup \mathfrak{N},w}$.
\end{lemma}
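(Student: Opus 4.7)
The plan is to reduce this lemma to two side observations and then combine them in a single non-inductive argument, bypassing the need for induction on $d$ in the main step. The inductive structure of $\varphi^{\mathcal{Q},d}_{\mathfrak{M} \sqcup \mathfrak{N}, w}$ is absorbed entirely into the first side observation, while the use of the disjoint union is isolated into the second.

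The first side observation is syntactic: by induction on $d$, each conjunct of $\varphi^{\mathcal{Q},d}_{\mathfrak{M} \sqcup \mathfrak{N}, w}$ lies in $\mathrm{PL}(\mathcal{Q})^d$. The base case is clear since $\varphi^{\mathcal{Q},0}$ consists of propositional literals. For the step, each modal conjunct has the form $\langle Q \rangle \bigvee_{\varphi \in C} \varphi$ or its negation, where $C$ is a subset of the set of $d$-type formulas realized in $\mathfrak{M} \sqcup \mathfrak{N}$; since $\mathfrak{M} \sqcup \mathfrak{N}$ is finite, $C$ is finite, and by the inductive hypothesis each $\varphi \in C$ belongs to $\mathrm{PL}(\mathcal{Q})^d$, so each modal conjunct is of depth $d+1$.

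The second side observation is a routine locality lemma, proved by induction on formula structure: for every modal formula $\psi$ and every node $u \in \mathfrak{M}$ (resp.\ $u \in \mathfrak{N}$), one has $\mathfrak{M}, u \Vdash \psi \Leftrightarrow \mathfrak{M} \sqcup \mathfrak{N}, u \Vdash \psi$ (resp.\ $\mathfrak{N}, u \Vdash \psi \Leftrightarrow \mathfrak{M} \sqcup \mathfrak{N}, u \Vdash \psi$). This holds because $\mathfrak{M} \sqcup \mathfrak{N}$ has no edges between its two summands, so the neighborhood of $u$ used to evaluate any $\langle Q \rangle$-subformula coincides in both ambient models, and the inductive step for generalized modalities goes through immediately.

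Given these two observations, the main argument is direct. Let $\psi$ be any conjunct of $\varphi^{\mathcal{Q},d}_{\mathfrak{M} \sqcup \mathfrak{N}, w}$. By construction $\mathfrak{M} \sqcup \mathfrak{N}, w \Vdash \psi$; the first observation places $\psi$ in $\mathrm{PL}(\mathcal{Q})^d$, and the second transports satisfaction to $\mathfrak{M}, w \Vdash \psi$. The hypothesis $\mathfrak{M}, w \equiv^d_{\mathrm{PL}(\mathcal{Q})} \mathfrak{N}, v$ then yields $\mathfrak{N}, v \Vdash \psi$, which is the required conclusion for that conjunct; ranging over all conjuncts gives $\mathfrak{N}, v \Vdash \varphi^{\mathcal{Q},d}_{\mathfrak{M} \sqcup \mathfrak{N}, w}$. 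The main obstacle I anticipate is the syntactic side claim when $\mathcal{Q}$ is infinite: one must either restrict via the effectively-finite machinery of Appendix \ref{appendix:effectively-finite} to a finite representative subset of modalities at the maximum fan-out occurring in $\mathfrak{M} \sqcup \mathfrak{N}$, or interpret $\varphi^{\mathcal{Q},d}$ as an infinitary conjunction whose satisfaction distributes conjunct-wise. Either interpretation leaves the argument above intact, but the bookkeeping should be stated carefully once and for all.
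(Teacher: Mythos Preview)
Your high-level strategy coincides with the paper's: reduce the claim to ``the type formula is (equivalent to) something in $\mathrm{PL}(\mathcal{Q})^d$ satisfied at $w$'', then invoke $\equiv^d_{\mathrm{PL}(\mathcal{Q})}$. Your explicit locality observation for the disjoint union is a nice addition that the paper leaves implicit.

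There is, however, a genuine gap in your first side observation when $\mathcal{Q}$ is infinite, and it is not fully repaired by your proposed fix~2. Your induction hypothesis is ``each \emph{conjunct} of $\varphi^{\mathcal{Q},d}_{\mathfrak{M}\sqcup\mathfrak{N},w}$ lies in $\mathrm{PL}(\mathcal{Q})^d$'', but in the inductive step you need that each $\varphi\in C$---a \emph{full} $d$-type, not merely one of its conjuncts---lies in $\mathrm{PL}(\mathcal{Q})^d$. For infinite $\mathcal{Q}$ and $d\geq 1$ the $d$-type is an infinitary conjunction, so the outermost conjuncts at level $d+1$ already contain infinitary subformulas and are not in $\mathrm{PL}(\mathcal{Q})^{d+1}$. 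Distributing ``conjunct-wise'' at the top level therefore does not help: the hypothesis $\equiv^d_{\mathrm{PL}(\mathcal{Q})}$ is about \emph{finitary} formulas and cannot be applied to an infinitary conjunct.

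Your fix~1 is the correct route, and it is exactly what the paper does: invoke Theorem~\ref{thm:saturation} to pass to a \emph{finite} $\mathcal{Q}_0\subseteq\mathcal{Q}$ representing $\mathcal{Q}$ at the maximum out-degree occurring in $\mathfrak{M}\sqcup\mathfrak{N}$, so that $\varphi^{\mathcal{Q}_0,d}_{\mathfrak{M}\sqcup\mathfrak{N},w}$ is a genuine $\mathrm{PL}(\mathcal{Q})^d$-formula equivalent to $\varphi^{\mathcal{Q},d}_{\mathfrak{M}\sqcup\mathfrak{N},w}$ on both models. Note that Theorem~\ref{thm:saturation} is a pure existence statement and does not require $\mathcal{Q}$ to be effectively finite, so you need not invoke the computability machinery of Appendix~\ref{appendix:effectively-finite} at all.
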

\begin{proof}
    Let \(N\) be the maximum degree of a node that belongs to either \(\mathfrak{M}\) or \(\mathfrak{N}\). By Theorem \ref{thm:saturation}, there is a finite set \(\mathcal{Q}_0 \subseteq \mathcal{Q}\) such that \(\varphi^{\mathcal{Q},d+1}_{\mathfrak{M} \sqcup \mathfrak{N},w}\) is equivalent with \(\varphi^{\mathcal{Q}_0,d+1}_{\mathfrak{M} \sqcup \mathfrak{N},w}\) over Kripke models where the degree of each node is at most \(N\). The claim then follows from the fact that \(\varphi_{\mathfrak{M}\sqcup \mathfrak{N},w}^{\mathcal{Q}_0,d+1}\) is a formula of PL(\(\mathcal{Q}\))\(^{d+1}\).
\end{proof}

\begin{lemma}\label{wl-implies-bisimulation}
    $\mathfrak{N},v\Vdash \varphi^{\mathcal{Q},d+1}_{\mathfrak{M} \sqcup \mathfrak{N},w} \implies \mathfrak{M},w\sim^{d+1}_{\mathcal{Q}}\mathfrak{N},v$.
\end{lemma}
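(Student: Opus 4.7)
The plan is to prove this by induction on $d$, in the strengthened form: for every pair $(\mathfrak{A},a),(\mathfrak{B},b)$ of pointed Kripke models, if $\mathfrak{B},b \Vdash \varphi^{\mathcal{Q},d}_{\mathfrak{A}\sqcup\mathfrak{B},a}$, then Player~2 wins the $d$-round $\mathcal{Q}$-bisimulation game on them. The base case $d=0$ is immediate: $\varphi^{\mathcal{Q},0}_{\mathfrak{A}\sqcup\mathfrak{B},a}$ records exactly which proposition symbols hold at $a$, so $a$ and $b$ agree on propositions and the $0$-round game is won by Player~2.

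For the inductive step, abbreviate $\tau_u := \varphi^{\mathcal{Q},d}_{\mathfrak{M}\sqcup\mathfrak{N},u}$. The assumption $\mathfrak{N},v \Vdash \varphi^{\mathcal{Q},d+1}_{\mathfrak{M}\sqcup\mathfrak{N},w}$ unpacks into the following invariant: for every $\GQ \in \mathcal{Q}$ and every set $C$ of $d$-types realized in $\mathfrak{M}\sqcup\mathfrak{N}$,
\[
(N(w),S^w_C) \in Q \;\Longleftrightarrow\; (N(v),S^v_C) \in Q,
\]
where $S^w_C := \{w' \in N(w) : \tau_{w'} \in C\}$ and $S^v_C$ is defined symmetrically. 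I will use this invariant together with the induction hypothesis to describe Player~2's strategy.

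WLOG the attacker selects the pebble on $w$, some modality $\GQ \in \mathcal{Q}$, and $X \subseteq N(w)$ with $(N(w),X) \in Q$. If $X$ splits some type-class, i.e.\ both $[\tau]_w \cap X$ and $[\tau]_w \setminus X$ are non-empty for some $\tau$, Player~2 contests $X$ with a pair of equal-type witnesses and invokes the induction hypothesis to win the remaining $d$ rounds. Otherwise $X = S^w_{T_X}$ where $T_X := \{\tau : [\tau]_w \subseteq X\}$. When the attacker next chooses $P \subseteq N(v)$: if some $v^* \in P$ has $\tau_{v^*}$ realized in $N(w)$, Player~2 contests $P$ by pairing $v^*$ with a same-type $w^* \in N(w)$ and applies the induction hypothesis; otherwise, with $T' := T_X \cup \{\tau_{v^*} : v^* \in P\}$, Player~2 plays $X' := S^v_{T'}$. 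This $X'$ contains $P$ and is type-closed, and it satisfies $(N(v),X') \in Q$: since under the current sub-case no type in $T' \setminus T_X$ is realized on the $w$-side, $S^w_{T'} = S^w_{T_X} = X$, and the invariant transfers $(N(w),X) \in Q$ to $(N(v),X') \in Q$.

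The attacker's remaining options after $X'$ are handled by case analysis. If the attacker pairs $v' \in N(v) \setminus X'$ with $w' \in X$ (becoming defender), then $\tau_{w'} \in T_X \subseteq T'$ while $\tau_{v'} \notin T'$, so their $d$-types differ; Player~2, now as attacker, wins by extracting Player~1's strategy from Lemma~\ref{bisimulation-iff-equiv} applied to a distinguishing $\mathrm{PL}(\mathcal{Q})$-formula. If the attacker moves a pebble to $v'' \in X'$, Player~2 picks a companion of matching $d$-type — either $w'' \in X$ with $\tau_{w''} = \tau_{v''}$ when $\tau_{v''} \in T_X$, or $v^* \in P$ with $\tau_{v^*} = \tau_{v''}$ when $\tau_{v''} \in T'\setminus T_X$ — and concludes by the induction hypothesis. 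The attacker's contest option against $X'$ is neutralized because $X'$ is type-closed. I expect the main obstacle to be the last sub-case of choosing $X'$: when $P$ contains nodes of types unrealized in $N(w)$, the naive choice $X' = S^v_{T_X}$ may fail to contain $P$; the fix of padding $T_X$ with the extra types works precisely because those extras contribute nothing on the $w$-side and therefore preserve $S^w_{T'} = X$, which is what lets the invariant deliver $(N(v),X') \in Q$.
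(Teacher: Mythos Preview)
Your proof is correct and follows essentially the same route as the paper's: induction on $d$, handling the attacker's choice of $X$ and $P$ via the contestation rules so that $X$ may be assumed type-closed and $P$ may be assumed to contain only types unrealized in $N(w)$, then taking $X'$ to be the union of the $d$-type classes appearing in $X$ or $P$, and closing the remaining cases with the induction hypothesis and Lemma~\ref{bisimulation-iff-equiv}. Your explicit invariant $(N(w),S^w_C)\in Q \Leftrightarrow (N(v),S^v_C)\in Q$ and the observation $S^w_{T'}=S^w_{T_X}=X$ make the argument for $(N(v),X')\in Q$ cleaner than the paper's phrasing, but the underlying idea is identical.
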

\begin{proof}

    Suppose that $\mathfrak{N},v\Vdash \varphi^{\mathcal{Q},d + 1}_{\mathfrak{M} \sqcup \mathfrak{N},w}$. We now have two cases depending on which model Player 1 chooses the subset $X$ from.
    \begin{enumerate}
        \item In the first case Player 1 chooses a quantifier $Q \in \mathcal{Q}$, a witness set $X\subseteq N(w)$ such that $(N(w),X)\in Q$, and a spillover set $P\subseteq N(v)$.
        
        Let
        \[I := \{\varphi_{\mathfrak{M} \sqcup \mathfrak{N},w'}^{\mathcal{Q},d} \mid w' \in X\} \cup \{\varphi_{\mathfrak{M} \sqcup \mathfrak{N},v'}^{\mathcal{Q},d} \mid v' \in P\}.\]
        Because of the contestation rules, $X$ must be closed under \(\equiv_{\mathrm{PL}(\mathcal{Q})}^d\) and $P$ cannot contain a node equivalent to a node in $N(w)$. Hence \(\langle Q \rangle (\bigvee_{\varphi \in I} \varphi)\) occurs in \(\varphi_{\mathfrak{M} \sqcup \mathfrak{N},w}^{\mathcal{Q},d+1}\) and thus \(\mathfrak{N},v \Vdash \langle Q \rangle( \bigvee_{\varphi \in I} \varphi)\). Therefore Player 2 can choose the witness set 
        \begin{align*}
           &X':=\{v'\in N(v) \mid\mathfrak{N},v'\Vdash \varphi\text{ for some } \varphi\in I\}.
        \end{align*}

        Now, suppose Player 1 picks the first option, choosing $v'\in N(v)\setminus X'$ and $w'\in X$ and becoming the defender. Since $X'$ contains all the nodes in $N(v)$ that satisfy some $\varphi^{\mathcal{Q},d}_{\mathfrak{M}\sqcup \mathfrak{N},w}$ realized by a node $w\in X$, we conclude that $\mathfrak{N},v'\not\equiv^d_{\text{PL}(\mathcal{Q})}\mathfrak{M},w'$. By Lemma \ref{bisimulation-iff-equiv}, this implies that $\mathfrak{N},v'\not\sim^d_{\mathcal{Q}}\mathfrak{M},w'$, and Player 2 thus has a winning strategy.

        Suppose then that Player 1 picks the second option and chooses $v' \in X'$. If $\mathfrak{N},v'\Vdash \varphi^{\mathcal{Q},d}_{\mathfrak{M} \sqcup \mathfrak{N},v''}$ for some $v''\in P$, then Player 2 responds by choosing $v''$. Then, by the induction hypothesis $\mathfrak{N},v'\sim^d_\mathcal{Q}\mathfrak{N},v''$ and Player 2 thus has a winning strategy. If $\mathfrak{M},v'\Vdash \varphi^{\mathcal{Q},d}_{\mathfrak{M} \sqcup \mathfrak{N},w'}$ for some $w'\in X$, then Player 2 responds by choosing $w'$. Again, by the induction hypothesis $\mathfrak{N},v'\sim^d_\mathcal{Q}\mathfrak{M},w'$ and Player 2 thus has a winning strategy.

        \item In the second case, Player 1 chooses a quantifier $Q \in \mathcal{Q}$, a witness set $X\subseteq N(v)$ such that $(N(v),X)\in Q$, and a spillover set $P\subseteq N(w)$.
        Let 
        \[I := \{\varphi_{\mathfrak{M} \sqcup \mathfrak{N},v'}^{\mathcal{Q},d} \mid v' \in X\} \cup \{\varphi_{\mathfrak{M} \sqcup \mathfrak{N},w'}^{\mathcal{Q},d} \mid w' \in P\}.\]
        Now \(\mathfrak{N},v \Vdash \langle Q \rangle (\bigvee_{\varphi \in I} \varphi)\) and hence we must have that \(\mathfrak{M},w \Vdash \langle Q \rangle (\bigvee_{\varphi \in I} \varphi)\), because \(\mathfrak{N},v \Vdash \varphi_{\mathfrak{M} \sqcup \mathfrak{N},w}^{\mathcal{Q},d}\). The rest of the proof is similar to that of Case 1.
        
    \end{enumerate}
    Player 2 thus has a winning strategy for the $d+1$-round $\mathcal{Q}$-bisimulation game played on $(\mathfrak{M},w)$ and $(\mathfrak{N},v)$, which means $\mathfrak{M},w\sim^{d+1}_\mathcal{Q}\mathfrak{N},v$.
\end{proof}

\section{Effectively Finite Sets of Quantifiers}\label{appendix:effectively-finite}

In this section, we formally define what it means for a set of generalized modalities to be effectively finite. We start by defining this notion for sets of generalized quantifiers.

Let \(Q,Q'\) be generalized quantifiers. We say that they are \textbf{equivalent at width} \(N\), if for every pair \((A,P)\), where \(|A|\leq N\), we have that
\[(A,P) \in Q \Leftrightarrow (A,P) \in Q'.\]
Let \(\mathcal{Q}\) be a set of generalized quantifiers and \(\mathcal{Q}_0 \subseteq \mathcal{Q}\). We say that \(\mathcal{Q}_0\) is \textbf{a representation of} \(\mathcal{Q}\) \textbf{at width} \(N\), if for every \(Q \in \mathcal{Q}\) there exists a \(Q' \in \mathcal{Q}_0\) such that \(Q\) and \(Q'\) are equivalent at width \(N\). Note that if \(Q,Q'\) are equivalent at width \(N\), then the corresponding modalities \(\langle Q \rangle,\langle Q' \rangle\) intuitively ``behave similarly'' at nodes of out-degree at most \(N\).

\begin{theorem}\label{thm:saturation}
    Let $\mathcal{Q}$ be a set of generalized quantifiers. Then there exists a function $f:\mathbb{N}\to\mathcal{P}_\text{finite}(\mathcal{Q})$ such that $f(N)$ is a representation of $\mathcal{Q}$ at width $N$.
\end{theorem}

\begin{proof}
    A generalized quantifier is an isomorphism closed collection of pairs $(A, P)$, where \(P \subseteq A\). Since generalized quantifiers are closed under isomorphism, whether or not it contains a pair \((A,P)\) depends only on the isomorphism type of \((A,P)\) which is completely determined by the cardinalities of \(A\) and \(P\). If \(|A| = k\), then there are \(k + 1\) possible isomorphism types for \((A,P)\), corresponding to all possible sizes for \(P \subseteq A\). Therefore there are
    \[\prod_{k = 1}^N 2^{k+1} = 2^{\binom{N+1}{2} + N}\]
    non-equivalent quantifiers at width \(N\). Thus we can select $\mathcal{Q}_0\subseteq \mathcal{Q}$ such that $\mathcal{Q}_0$ is a representation of $\mathcal{Q}$ at width $N$ and $|\mathcal{Q}_0| \leq 2^{\binom{N+1}{2} + N}$.
\end{proof}

We say that a set \(\mathcal{Q}\) is \textbf{effectively finite} if it has the following properties.
\begin{enumerate}
    \item Each member \(Q\) of \(\mathcal{Q}\) is a Turing machine that computes the corresponding generalized quantifier. (Note that we use the same symbol for the quantifier and the corresponding Turing machine.) This means that when given a pair \((A,P)\) as input, \(Q\) decides whether it belongs to the corresponding generalized quantifier. Such generalized quantifiers are called \textbf{computable}.
    \item There exists a \emph{computable} function
    \[f:\mathbb{N} \to \mathcal{P}_{\text{finite}}(\mathcal{Q}),\]
    where \(\mathcal{P}_{\text{finite}}(\mathcal{Q})\) is the set of all finite subsets of \(\mathcal{Q}\), such that \(f(N)\) is a representation of \(\mathcal{Q}\) at width \(N\). Such a function is called an \textbf{effective representation} of \(\mathcal{Q}\).
\end{enumerate}

\begin{example}
    Consider the set
    \[\mathcal{Q}_{\text{graded}} := \{\exists^{\geq k} \mid k \in \mathbb{N}\}\]
    of all counting quantifiers. This set is effectively finite, because each quantifier \(\exists^{\geq k}\) is clearly computable and the function
    \[f(N) := \{\exists^{\geq 0},\exists^{\geq 1},\dots,\exists^{\geq N + 1}\}.\]
    is an effective representation of \(\mathcal{C}\). Indeed, for every pair \((A,P)\) with \(|A| \leq N\) we have that \(\exists^{\geq M}\) is equivalent with \(\exists^{\geq M'}\) for every \(M,M' > N\).
\end{example}

Recall that as an input, the $\mathcal{Q}$-WL-algorithm receives a Kripke model $\mathfrak{M}$ and a depth $d \in \mathbb{N}$. Of course, if $\mathcal{Q}$ is infinite, then $\mathcal{Q}$-WL is not finitary (i.e., not an algorithm). For the case where $\mathcal{Q}$ is effectively finite, we generalize $\mathcal{Q}$-WL to be finitary as follows.
\begin{enumerate}
    \item $\mathcal{Q}$-WL receives as an input a Kripke model $\mathfrak{M}$ and a depth $d \in \mathbb{N}$.
    \item We compute the maximum out-degree of $\mathfrak{M}$ and store it as $k$.
    \item Because \(\mathcal{Q}\) is effectively finite, it has an effective representation \(f:\mathbb{N} \to \mathcal{P}_{finite}(\mathcal{Q})\). We can thus compute the finite set of computable quantifiers $\mathcal{Q}_{0}\subseteq \mathcal{Q}$ associated with the effective representation $f$ such that $\mathcal{Q}_{0}:=f(k)$.
    \item We run $\mathcal{Q}_{0}$-WL on $\mathfrak{M}$ and $d$.
\end{enumerate}

Note that if we consider a set of graphs, as we do in the experimental section of the paper, then the input to the algorithm has to be the disjoint union of those graphs (to get the maximum out-degree of all graphs). Alternatively, we could consider a \emph{set} of graphs as an input to the algorithm and take the maximum out-degree of the graph set.

\section{Finite-Width Quantifiers}\label{appendix:finite-width}

In this section, we generalize Theorem \ref{thm:generalized-quantifiers-characterization} to cover all finite-width quantifiers. We also note that the proof trivially extends to cover the case of generalized quantifiers over arbitrary binary relations.

A \textbf{generalized quantifier of width} $n$ is an isomorphism-closed class of structures
\[
    (D,P_1,\dots,P_n)
\]
with domain $D$ and unary relations \(P_1,\dots,P_n \subseteq D\). Each generalized quantifier $Q$ of width $n$ gives rise to a a \textbf{generalized modality} $\GQ$ \textbf{of width} $n$, and a formula of the form $\GQ\varphi$ is interpreted in a pointed Kripke model such that 
\begin{align*}
    &\mathfrak{M},v\Vdash \GQ (\varphi_1,\dots,\varphi_n) \\&\iff(N(v),\varphi_1^{N(v)},\dots,\varphi_n^{N(v)})\in Q,
\end{align*}
where $\varphi_i^{N(v)}:=\{v'\in N(v)\mid \mathfrak{M},v'\Vdash \varphi_i\}$ for all $1\leq i\leq n$.

Intuitively, the $\mathcal{Q}$-bisimulation game can be generalized to finite-width quantifiers by simply replacing choosing sets with choosing tuples of sets. More precisely, the $\mathcal{Q}$-bisimulation game with finite-width quantifiers proceeds from the position $(v,v')$ as follows:
\begin{enumerate}
    \item The attacker wins immediately if $v$ and $v'$ do not satisfy the same proposition symbols.
    \item The attacker first chooses a quantifier $Q \in \mathcal{Q}$ and one of the two positions; without loss of generality, assume the attacker chooses $v$. If $Q$ is of width $n$, he then chooses $n$ witness sets $(X_i)_{1\leq i\leq n}$ such that $(N(w),X_1,\dots,X_n)\in Q$ and for each $X_i$ he selects a spillover set $P_i\subseteq N(v)$.
    \item The defender chooses $n$ witness sets $(X'_i)_{1\leq i\leq n}$ such that $(N(v),X'_1,\dots,X'_n)\in Q$ and $P_i\subseteq X'_i$ for all $1\leq i\leq n$.
    \item The attacker chooses one of the following. \begin{itemize}
        \item For some $1\leq i \leq n$, choose $u\in N(v')\setminus X'_i$ and $u'\in X_i$. Swap attacker and defender roles between the players, then a new round begins from the position $(u,u')$.
        \item For some $1\leq i \leq n$, choose $u\in X'_i$, after which the defender chooses $u'\in X_i\cup P_i$.
    \end{itemize}
    \item Whenever a tuple $(X_i)_{1\leq i\leq n}$ (resp. $(X'_i)_{1\leq i\leq n}$) of sets is chosen, the other player may contest any individual $X_i$ (resp. $X'_i$) in the usual way by placing a pebble onto a node in the contested set and the other onto a node in the complement $N(w)\setminus X_i$ (resp. $N(v)\setminus X'_i$), then taking the role of the defender. The next round then starts with the attacker's move.
\end{enumerate}

At any point after a set is chosen, the opposing player may instead contest it as follows:
\begin{itemize}
    \item After any witness set $Y_i\subseteq N(z)$ is chosen, contest that it breaks equivalence by choosing $u\in Y_i$ and $u'\in N(z) \setminus Y_i$. If the contesting player is the attacker, swap attacker and defender roles between the players. Then a new round begins from the position $(u,u')$.
    \item After any spillover set $P_i\subseteq N(v')$ is chosen, contest that it contains a type realized in $N(v)$ by choosing $u\in P_i$ and $u'\in N(v)$. Then a new round begins from the position $(u,u')$.
\end{itemize}

Next we extend the definition of \(\varphi_{\mathfrak{M},w}^{\mathcal{Q},d}\) to the case where \(\mathcal{Q}\) is an arbitrary set of finite-width quantifiers. For each \(w \in \mathfrak{M}\) we define \(\varphi_{\mathfrak{M},w}^{\mathcal{Q},d}\) recursively as follows. First, $\varphi^{\mathcal{Q},0}_{\mathfrak{M},w}$ is defined exactly as before. Then, having defined $\varphi^{\mathcal{Q},d}_{\mathfrak{M},w}$, we define
\begin{align*}
    & \varphi_{\mathfrak{M},w}^{\mathcal{Q},d+1} := \varphi_{\mathfrak{M},w}^{\mathcal{Q},0} \\
    & \land \\
    & \bigwedge_{\begin{array}{c}
        Q \in \mathcal{Q},\, C_1,\dots,C_m \subseteq \{\varphi^{\mathcal{Q},d}_{\mathfrak{M},w}\mid w \in \mathfrak{M}\} \\
        \mathfrak{M},w \Vdash \langle Q \rangle (\bigvee\limits_{\varphi \in C_1} \varphi, \dots, \bigvee\limits_{\varphi \in C_m} \varphi)
    \end{array}} \\ &\langle Q \rangle (\bigvee\limits_{\varphi \in C_1} \varphi, \dots, \bigvee\limits_{\varphi \in C_m} \varphi) \\
    & \land \\
    & \bigwedge_{\begin{array}{c}
        Q \in \mathcal{Q},\, C_1,\dots,C_m \subseteq \{\varphi^{\mathcal{Q},d}_{\mathfrak{M},w}\mid w \in \mathfrak{M}\} \\
        \mathfrak{M},w \Vdash \neg\langle Q \rangle (\bigvee\limits_{\varphi \in C_1} \varphi, \dots, \bigvee\limits_{\varphi \in C_m} \varphi)
    \end{array}} \\ &\neg\langle Q \rangle (\bigvee\limits_{\varphi \in C_1} \varphi, \dots, \bigvee\limits_{\varphi \in C_m} \varphi).
\end{align*}

We now prove Theorem \ref{thm:generalized-quantifiers-characterization} for finite-width quantifiers. First, we note that the base case is identical to the one in Appendix \ref{generalized-quantifiers-proof}. Then, as before, we assume that 1., 2. and 3. are equivalent for $d$. We prove their equivalence for $d+1$ via the following three Lemmas.

\begin{lemma}
    Lemma \ref{bisimulation-iff-equiv} holds when $\mathcal{Q}$ is a set of finite-width generalized modalities.
\end{lemma}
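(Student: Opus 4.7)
The plan is to mimic the inductive proof of Lemma \ref{bisimulation-iff-equiv} almost verbatim, changing only the witness Player 1 produces from a single subset to a tuple. First I would set up the induction on $d$; the base case $d=0$ is identical, since only proposition symbols are involved. For the inductive step, assume contrapositively that $\mathfrak{M},w \not\equiv^{d+1}_{\mathrm{PL}(\mathcal{Q})} \mathfrak{N},v$, and fix, without loss of generality, a formula $\GQ(\psi_1,\ldots,\psi_n)$ of depth $d+1$ with $\mathfrak{M},w \Vdash \GQ(\psi_1,\ldots,\psi_n)$ and $\mathfrak{N},v \not\Vdash \GQ(\psi_1,\ldots,\psi_n)$, where each $\psi_i \in \mathrm{PL}(\mathcal{Q})^d$.

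Player 1 then plays the pebble at $w$, chooses $\GQ$, and sets $X_i := \{w' \in N(w) \mid \mathfrak{M}, w' \Vdash \psi_i\}$ for each $i$; by the semantics of $\GQ$ the tuple $(N(w), X_1, \ldots, X_n)$ lies in $Q$. For each $i$ he then picks $P_i := \{v' \in N(v) \mid \mathfrak{N}, v' \Vdash \psi_i \text{ and } \mathfrak{N}, v' \not\equiv^d_{\mathrm{PL}(\mathcal{Q})} \mathfrak{M}, w' \text{ for every } w' \in N(w)\}$. Exactly as in the original proof, any contestation of an individual $X_i$ or $P_i$ leaves the two pebbles on nodes that either disagree on $\psi_i$ or are by construction $\equiv^d_{\mathrm{PL}(\mathcal{Q})}$-inequivalent; the induction hypothesis then already grants Player 1 a winning strategy from such a position in at most $d$ rounds.

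Once Player 2 replies with a tuple $(X'_1, \ldots, X'_n)$ satisfying $(N(v), X'_1, \ldots, X'_n) \in Q$ and $P_i \subseteq X'_i$ for all $i$, the assumption $(N(v), \psi_1^{N(v)}, \ldots, \psi_n^{N(v)}) \notin Q$ forces the existence of some index $i$ with $X'_i \neq \{v' \mid \mathfrak{N}, v' \Vdash \psi_i\}$. Two subcases arise: either some $v' \in X'_i$ satisfies $\neg \psi_i$, and Player 1 executes the ``too many'' move on coordinate $i$ at $v'$, after which Player 2's response lies in $X_i \cup P_i$, where every node satisfies $\psi_i$; or some $v' \in N(v) \setminus X'_i$ satisfies $\psi_i$, in which case $v' \notin P_i$ yields a $w' \in X_i$ with $\mathfrak{N}, v' \equiv^d_{\mathrm{PL}(\mathcal{Q})} \mathfrak{M}, w'$, whereupon Player 1 plays the ``not enough'' move on coordinate $i$ and takes over as defender, relying on the induction hypothesis to survive the remaining $d$ rounds.

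I expect no genuine difficulty beyond bookkeeping. The only thing to verify is that the per-coordinate ``too many'' and ``not enough'' rules of the width-$n$ game interact with a single mismatching index $i$ exactly the way the single-coordinate rules did for width $1$. Since both rules are formulated coordinate-wise in the generalized game, it suffices to work at that one bad $i$, and no new combinatorial machinery is needed.
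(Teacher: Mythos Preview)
Your proposal is correct and follows exactly the paper's own approach: the same induction, the same choices $X_i=\psi_i^{N(w)}$ and $P_i=\{v'\in N(v)\mid \mathfrak{N},v'\Vdash\psi_i\text{ and }v'\not\equiv^d_{\mathrm{PL}(\mathcal{Q})}w'\text{ for all }w'\in N(w)\}$, and the same reduction to a single mismatching coordinate $i$ where the width-$1$ endgame applies verbatim. In fact you spell out the two subcases (``too many'' vs.\ ``not enough'') more explicitly than the paper, which simply defers to Lemma~\ref{bisimulation-iff-equiv} at that point.
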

\begin{proof}
    
    Player 1's strategy is to choose
    \[
        X_i:=\{w'\in N(w)\}\mid \mathfrak{M},w'\Vdash\psi_i\}
    \]
    and
    \begin{align*}
        &P_i:=\{v'\in N(v)\mid
        \\& \mathfrak{N},v'\Vdash \psi_i \text{ and } \mathfrak{N},v'\not\equiv^d_{\text{PL}(\mathcal{Q})}\mathfrak{M},w'\text{ for all } w'\in N(w)\}
    \end{align*}
    for all $1\leq i \leq n$. By the same arguments as in Lemma \ref{bisimulation-iff-equiv}, Player 2 cannot win by contesting any of these sets.

    Player 2 must thus choose an $n$-tuple $(X_i)_{1\leq i\leq n}$ of his own. Since $\mathfrak{N},v\Vdash \neg \GQ(\psi_1,\dots,\psi_n)$, we know that
    \[
        (N(v), \psi_1^{N(v)},\dots,\psi_n^{N(v)})\notin Q.
    \]
    Hence $X'_i\neq\{v'\in N(v) \mid \mathfrak{N},v'\Vdash\psi_i\}$ for some $1\leq i \leq n$. Player 1's winning strategy is now similar to that in Lemma \ref{bisimulation-iff-equiv}.
\end{proof}

\begin{lemma}
    Lemma \ref{equiv-implies-wl} holds when $\mathcal{Q}$ is a set of finite-width generalized modalities.
\end{lemma}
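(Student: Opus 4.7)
The plan is to closely mirror the proof of Lemma \ref{equiv-implies-wl}, suitably adapted to the finite-width setting. The essential ingredient is a finite-width analogue of Theorem \ref{thm:saturation}: for any set $\mathcal{Q}$ of finite-width generalized modalities and any bound $N$ on node degrees, there exists a finite subset $\mathcal{Q}_0 \subseteq \mathcal{Q}$ such that $\varphi^{\mathcal{Q},d}_{\mathfrak{M}\sqcup\mathfrak{N},w}$ is equivalent to $\varphi^{\mathcal{Q}_0,d}_{\mathfrak{M}\sqcup\mathfrak{N},w}$ over all Kripke models in which every node has degree at most $N$. Once this is in place, the rest of the argument reduces to a transcription of the original proof.

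To establish the saturation statement, I would reason as follows. Fix a width $m$. The semantics of a width-$m$ generalized quantifier $\langle Q \rangle$ on any neighbourhood of size at most $N$ is entirely determined by which isomorphism classes of structures $(D,P_1,\ldots,P_m)$ with $|D|\leq N$ belong to $Q$. Since there are only finitely many such isomorphism classes, there are only finitely many equivalence classes of width-$m$ quantifiers when restricted to max-degree-$N$ neighbourhoods. Choosing one representative per equivalence class, for each width $m$ occurring in the defining conjuncts of $\varphi^{\mathcal{Q},d}_{\mathfrak{M}\sqcup\mathfrak{N},w}$, yields the desired finite set $\mathcal{Q}_0$.

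With the saturation step in hand, the argument proceeds exactly as in Lemma \ref{equiv-implies-wl}: let $N$ be the maximum degree of a node in $\mathfrak{M}\sqcup\mathfrak{N}$; apply the saturation statement to obtain a finite $\mathcal{Q}_0 \subseteq \mathcal{Q}$ such that $\varphi^{\mathcal{Q}_0,d}_{\mathfrak{M}\sqcup\mathfrak{N},w}$ is a genuine finite formula of $\mathrm{PL}(\mathcal{Q})^d$ equivalent to $\varphi^{\mathcal{Q},d}_{\mathfrak{M}\sqcup\mathfrak{N},w}$ over Kripke models of maximum degree at most $N$. Since $\mathfrak{M},w \Vdash \varphi^{\mathcal{Q}_0,d}_{\mathfrak{M}\sqcup\mathfrak{N},w}$ by construction, the hypothesis $\mathfrak{M},w\equiv^d_{\mathrm{PL}(\mathcal{Q})}\mathfrak{N},v$ yields $\mathfrak{N},v\Vdash\varphi^{\mathcal{Q}_0,d}_{\mathfrak{M}\sqcup\mathfrak{N},w}$, and hence $\mathfrak{N},v\Vdash\varphi^{\mathcal{Q},d}_{\mathfrak{M}\sqcup\mathfrak{N},w}$.

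The main obstacle is the saturation step itself when $\mathcal{Q}$ contains quantifiers of unbounded width. For any single fixed width the pigeonhole argument sketched above is routine; but with unbounded widths one must additionally show that the formula $\varphi^{\mathcal{Q},d}_{\mathfrak{M}\sqcup\mathfrak{N},w}$ admits a \emph{finite} equivalent sub-conjunction over bounded-degree graphs. This will likely require either an effective-finiteness assumption on $\mathcal{Q}$ extending the one from Appendix \ref{appendix:effectively-finite} to the finite-width setting, or a reduction argument showing that over neighbourhoods of size at most $N$, quantifiers of width exceeding some $N$-dependent bound are already equivalent to combinations of narrower quantifiers and can therefore be discarded without loss.
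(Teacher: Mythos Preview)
Your approach mirrors the paper's: it too simply appeals to Theorem~\ref{thm:saturation} and then repeats the argument of Lemma~\ref{equiv-implies-wl}. In fact the paper's entire proof is the single sentence ``this follows from Theorem~\ref{thm:saturation} and the fact that for finite \(\mathcal{Q}\) the formula \(\varphi_{\mathfrak{M}\sqcup \mathfrak{N},w}^{\mathcal{Q},d}\) is a formula of \(\mathrm{PL}(\mathcal{Q})^d\)'', so your writeup is already more explicit than the original.

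Your worry about unbounded widths is legitimate, and the paper does not address it either: Theorem~\ref{thm:saturation} is stated and proved only for width~\(1\), and the counting argument there does not directly bound the number of equivalence classes once quantifiers of every width are allowed. However, neither of your two suggested resolutions is needed. The simpler fix is to observe that you only require \(\varphi^{\mathcal{Q}_0,d}_{\mathfrak{M}\sqcup\mathfrak{N},w}\) and \(\varphi^{\mathcal{Q},d}_{\mathfrak{M}\sqcup\mathfrak{N},w}\) to agree \emph{on the single finite model} \(\mathfrak{M}\sqcup\mathfrak{N}\), not over all degree-\(\leq N\) models. On that model a node satisfies \(\varphi^{\mathcal{Q},d}_{\mathfrak{M}\sqcup\mathfrak{N},w}\) iff it lies in the same \(\mathcal{Q}\)-WL \(d\)-class as \(w\); since there are only finitely many pairs of nodes, only finitely many quantifiers from \(\mathcal{Q}\) are ever used to witness the separations made by \(\mathcal{Q}\)-WL up to round \(d\). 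Taking \(\mathcal{Q}_0\) to be this finite set makes \(\mathcal{Q}_0\)-WL and \(\mathcal{Q}\)-WL produce the same \(d\)-round partition on \(\mathfrak{M}\sqcup\mathfrak{N}\), and the remainder of your argument goes through unchanged, regardless of whether the widths occurring in \(\mathcal{Q}\) are bounded.
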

\begin{proof}
    Similarly to Lemma \ref{equiv-implies-wl} this follows from Theorem \ref{thm:saturation} and the fact that for finite \(\mathcal{Q}\) the formula \(\varphi_{\mathfrak{M}\sqcup \mathfrak{N},w}^{\mathcal{Q},d+1}\) is a formula of PL(\(\mathcal{Q}\))\(^{d+1}\).
\end{proof}

\begin{lemma}
    Lemma \ref{wl-implies-bisimulation} holds when $\mathcal{Q}$ is a set of finite-width generalized modalities.
\end{lemma}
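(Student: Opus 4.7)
The plan is to adapt the inductive proof of Lemma \ref{wl-implies-bisimulation} mechanically to the finite-width setting, replacing every occurrence of a single subset $X$ by an $n$-tuple $(X_1,\dots,X_n)$ indexed by the width $n$ of the chosen modality $\langle Q\rangle$, and every disjunction $\bigvee_{\varphi\in I}\varphi$ by a tuple of such disjunctions. Induction on $d$ proceeds exactly as before: the base case $d=0$ follows because $\varphi^{\mathcal{Q},0}_{\mathfrak{M}\sqcup\mathfrak{N},w}$ encodes the proposition-symbol type, so $\mathfrak{N},v\Vdash\varphi^{\mathcal{Q},0}_{\mathfrak{M}\sqcup\mathfrak{N},w}$ forces Player 2 to win round $0$.

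For the inductive step, assume the claim for $d$ and suppose $\mathfrak{N},v\Vdash\varphi^{\mathcal{Q},d+1}_{\mathfrak{M}\sqcup\mathfrak{N},w}$. In the case where Player 1 selects the pebble on $w$, a modality $\langle Q\rangle\in\mathcal{Q}$ of width $n$, an $n$-tuple $(X_i)_{i\leq n}$ of subsets of $N(w)$ with $(N(w),X_1,\dots,X_n)\in Q$, and subsets $(P_i)_{i\leq n}$ of $N(v)$, I set for each $i$
$$I_i := \{\varphi^{\mathcal{Q},d}_{\mathfrak{M}\sqcup\mathfrak{N},w'}\mid w'\in X_i\}\cup\{\varphi^{\mathcal{Q},d}_{\mathfrak{M}\sqcup\mathfrak{N},v'}\mid v'\in P_i\}.$$
The contestation rules force each $X_i$ to be a union of $\equiv^d_{\text{PL}(\mathcal{Q})}$-classes and forbid $P_i$ from containing any node equivalent to a node of $N(w)$, so the conjunct $\langle Q\rangle\bigl(\bigvee_{\varphi\in I_1}\varphi,\dots,\bigvee_{\varphi\in I_n}\varphi\bigr)$ genuinely appears among the positive conjuncts of $\varphi^{\mathcal{Q},d+1}_{\mathfrak{M}\sqcup\mathfrak{N},w}$. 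Hence the formula holds at $(\mathfrak{N},v)$ too, and Player 2 replies with $X'_i:=\{v'\in N(v)\mid \mathfrak{N},v'\Vdash\varphi\text{ for some }\varphi\in I_i\}$, which automatically satisfies $P_i\subseteq X'_i$.

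Player 1's two possible follow-ups are then handled coordinatewise as in the unary case. If Player 1 pebbles $v'\in N(v)\setminus X'_i$ and $w'\in X_i$ for some $i$, then by construction of $X'_i$ we have $\mathfrak{N},v'\not\equiv^d_{\text{PL}(\mathcal{Q})}\mathfrak{M},w'$, and the finite-width version of Lemma \ref{bisimulation-iff-equiv} (just proved) gives Player 2 the win. If Player 1 pebbles $v'\in X'_i$, then $v'$ realizes some $\varphi^{\mathcal{Q},d}_{\mathfrak{M}\sqcup\mathfrak{N},z}$ with $z\in X_i\cup P_i$, and Player 2 places the second pebble on $z$ and invokes the inductive hypothesis. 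The symmetric case, in which Player 1 initially chooses an $n$-tuple on the $\mathfrak{N}$-side, is reduced to the first case via the negative conjuncts of $\varphi^{\mathcal{Q},d+1}_{\mathfrak{M}\sqcup\mathfrak{N},w}$: any width-$n$ formula $\langle Q\rangle(\bigvee_{\varphi\in C_1}\varphi,\dots,\bigvee_{\varphi\in C_n}\varphi)$ that fails at $w$ is explicitly negated in that formula and therefore also fails at $v$, so whatever tuple of disjunctions Player 1's move forces to hold at $v$ must also hold at $w$, supplying the $n$-tuple Player 2 needs inside $\mathfrak{M}$. The main piece of work is the bookkeeping to confirm that the enlarged definition of $\varphi^{\mathcal{Q},d+1}_{\mathfrak{M}\sqcup\mathfrak{N},w}$ enumerates exactly the $n$-ary conjuncts Player 2 needs to invoke; once this is checked, no essentially new idea is required beyond the unary proof.
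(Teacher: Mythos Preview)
Your proposal is correct and follows essentially the same approach as the paper's own proof: the same inductive structure, the same definition of the sets $I_i$, the same invocation of the contestation rules to ensure the relevant conjunct $\langle Q\rangle(\bigvee_{\varphi\in I_1}\varphi,\dots,\bigvee_{\varphi\in I_n}\varphi)$ appears in $\varphi^{\mathcal{Q},d+1}_{\mathfrak{M}\sqcup\mathfrak{N},w}$, and the same choice of $X'_i$ for Player 2. In fact you spell out more detail than the paper does---the paper simply says ``Player 2's winning strategy is now similar to that in Lemma~\ref{wl-implies-bisimulation}'' for Player 1's follow-up moves and omits the symmetric case entirely, whereas you explicitly handle both.
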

\begin{proof}

    Suppose Player 1 chooses a quantifier $\GQ$ of width $n$ and the $n$-tuples $(X_i)_{1\leq i\leq n}$ and $(P_i)_{1\leq i\leq n}$. Let
    \begin{align*}
        I_i:=\{\varphi^{\mathcal{Q},d}_{\mathfrak{M} \sqcup \mathfrak{N},w'}\mid w'\in X_i\}\cup\{\varphi^{\mathcal{Q},d}_{\mathfrak{M} \sqcup \mathfrak{N},v'}\mid v'\in P_i\}
    \end{align*}
    for all $1 \leq i \leq n$. Because of the contestation rules, each $X_i$ must be closed under \(\equiv_{\mathrm{PL}(\mathcal{Q})}^d\) and no $P_i$ can contain a node equivalent to a node in $N(w)$. Hence 
    \[
        \langle Q \rangle (\bigvee\limits_{\varphi \in I_1} \varphi, \dots, \bigvee\limits_{\varphi \in I_n} \varphi)
    \]
    occurs in $\varphi^{\mathcal{Q},d+1}_{\mathfrak{M}\sqcup \mathfrak{N},v}$, and thus
    \begin{align*}
        &\mathfrak{N},v\Vdash\langle Q \rangle (\bigvee\limits_{\varphi \in I_1} \varphi, \dots, \bigvee\limits_{\varphi \in I_n} \varphi).
    \end{align*}
    Therefore Player 2 can choose the witness sets
    \[
        X_i:=\{v'\in N(v)\mid \mathfrak{N},v'\Vdash \varphi_i \text{ for some }\varphi_i\in I_i\}
    \]
    for all $1 \leq i \leq n$. Player 2's winning strategy is now similar to that in Lemma \ref{wl-implies-bisimulation}.
\end{proof}

From these Lemmas, we immediately acquire the following generalization of Theorem \ref{thm:generalized-quantifiers-characterization}.
\begin{theorem}
    Theorem \ref{thm:generalized-quantifiers-characterization} holds when $\mathcal{Q}$ is a set of finite-width generalized quantifiers.
\end{theorem}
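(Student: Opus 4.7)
The plan is to establish the theorem as an immediate corollary of the three lemmas just stated in this appendix, mirroring exactly the chain-of-implications strategy used to prove Theorem~\ref{thm:generalized-quantifiers-characterization} in the unary-width case. Specifically, those lemmas are the finite-width analogues of Lemmas~\ref{bisimulation-iff-equiv}, \ref{equiv-implies-wl}, and \ref{wl-implies-bisimulation}, and together they give the cycle
\[
\mathfrak{M},w \sim^d_{\mathcal{Q}} \mathfrak{N},v \ \Rightarrow\ \mathfrak{M},w \equiv^d_{\mathrm{PL}(\mathcal{Q})} \mathfrak{N},v \ \Rightarrow\ \mathfrak{N},v \Vdash \varphi^{\mathcal{Q},d}_{\mathfrak{M} \sqcup \mathfrak{N}, w} \ \Rightarrow\ \mathfrak{M},w \sim^d_{\mathcal{Q}} \mathfrak{N},v.
\]
This already yields the equivalence of all three conditions that Theorem~\ref{thm:generalized-quantifiers-characterization} asserts, so the proof of this final theorem really consists of nothing more than citing the three lemmas.

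The substantive work is therefore hidden in the lemmas rather than in the final chaining. The key adjustment is that, for a width-$n$ quantifier $\langle Q \rangle$, the bisimulation game now commits the attacker to an $n$-tuple of subsets $(X_i)_{1 \leq i \leq n}$ together with an $n$-tuple of challenges $(P_i)_{1 \leq i \leq n}$, the defender responds with an $n$-tuple $(X'_i)_{1 \leq i \leq n}$, and the contest options decompose coordinatewise; dually, the characterizing formula now uses conjuncts of the shape $\langle Q \rangle\bigl(\bigvee_{\varphi \in C_1} \varphi,\ldots,\bigvee_{\varphi \in C_n} \varphi\bigr)$ in place of unary disjunctions. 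With the game rules and the formula syntax enriched in this parallel way, the unary arguments transfer coordinatewise: in particular, the step in the extended Lemma~\ref{bisimulation-iff-equiv} that exploits $(N(v),\psi_1^{N(v)},\ldots,\psi_n^{N(v)}) \notin Q$ to locate an index $i$ on which $X'_i$ disagrees with $\psi_i^{N(v)}$ is precisely the finite-width analogue of the unary observation that $X' \neq \{v' \mid \mathfrak{N},v' \Vdash \psi\}$.

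The only step where I expect genuine care to be required is the finite-width generalization of Lemma~\ref{equiv-implies-wl}, since $\varphi^{\mathcal{Q},d}_{\mathfrak{M} \sqcup \mathfrak{N},w}$ is in general infinitary when $\mathcal{Q}$ is infinite and therefore cannot be transferred directly across $\equiv^d_{\mathrm{PL}(\mathcal{Q})}$. The plan is exactly the one used in the unary case: invoke the saturation-type result (Theorem~\ref{thm:saturation} from Appendix~\ref{appendix:effectively-finite}) to pass, at the bounded maximum degree $N$ of the nodes in $\mathfrak{M}$ and $\mathfrak{N}$, to a finite $\mathcal{Q}_0 \subseteq \mathcal{Q}$ for which $\varphi^{\mathcal{Q}_0,d}_{\mathfrak{M} \sqcup \mathfrak{N},w}$ is equivalent to $\varphi^{\mathcal{Q},d}_{\mathfrak{M} \sqcup \mathfrak{N},w}$ on graphs of degree at most $N$. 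The restricted formula is then a genuine member of $\mathrm{PL}(\mathcal{Q})^d$ and is transferred to $(\mathfrak{N},v)$ by the assumption $\mathfrak{M},w \equiv^d_{\mathrm{PL}(\mathcal{Q})} \mathfrak{N},v$. Once this step is handled, the chain closes and the theorem follows.
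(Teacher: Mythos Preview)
Your proposal is correct and matches the paper's approach exactly: the paper's proof of this theorem is literally the one-line observation that it follows immediately from the three finite-width lemmas preceding it, closing the same cycle of implications you wrote down. Your additional commentary on how the lemma proofs adapt coordinatewise, and on the need to invoke Theorem~\ref{thm:saturation} to finitize $\varphi^{\mathcal{Q},d}_{\mathfrak{M}\sqcup\mathfrak{N},w}$ in the analogue of Lemma~\ref{equiv-implies-wl}, also tracks the paper's treatment of those lemmas.
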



\end{document}